\documentclass{article}

\usepackage{xarxiv}

 \usepackage[british]{babel}

%% Some suggested packages, as needed:
\usepackage{natbib} % has a nice set of citation styles and commands

\usepackage{mathtools} % amsmath with fixes and additions
\usepackage{booktabs} % commands to create good-looking tables
\usepackage{tikz} % nice language for creating drawings and diagrams

 % just an example

\usepackage{graphicx}
\usepackage{multirow}
\usepackage{todonotes}
\usepackage{stmaryrd,nicefrac}
\usepackage{amsmath}
\usepackage{amssymb}
\usepackage{hyperref}
\usepackage{subfig}
\usepackage{amsthm}

\usepackage{hyperref}

% Attempt to make hyperref and algorithmic work together better:

% For theorems and such
\usepackage{amsmath}
\usepackage{amssymb}
\usepackage{mathtools}
\usepackage{amsthm}

% if you use cleveref..
\usepackage[capitalize,noabbrev]{cleveref}

%%%%%%%%%%%%%%%%%%%%%%%%%%%%%%%%
% OWN PACKAGES
%%%%%%%%%%%%%%%%%%%%%%%%%%%%%%%%

\usepackage{enumitem,nicefrac}

%%%%%%%%%%%%%%%%%%%%%%%%%%%%%%%%
% THEOREMS
%%%%%%%%%%%%%%%%%%%%%%%%%%%%%%%%
\theoremstyle{plain}
\newtheorem{theorem}{Theorem}[section]
\newtheorem{proposition}[theorem]{Proposition}
\newtheorem{lemma}[theorem]{Lemma}

\theoremstyle{definition}
\newtheorem{definition}[theorem]{Definition}

\theoremstyle{remark}

\newtheorem{example}[theorem]{Example}

%%% MACROS

\renewcommand{\vec}[1]{\boldsymbol{#1}}
\newcommand{\given}{\, | \,}
\newcommand{\hath}{\hat{h}}

\newcommand{\R}{\mathbb{R}}
\newcommand{\E}{\mathbb{E}}

\newcommand{\fromto}{\longrightarrow}

\newcommand*{\defeq}{\mathrel{\vcenter{\baselineskip0.5ex \lineskiplimit0pt
			\hbox{\footnotesize.}\hbox{\footnotesize.}}}%
	=}

\newcommand{\cX}{\mathcal{X}}
\newcommand{\cY}{\mathcal{Y}}
\newcommand{\cH}{\mathcal{H}}
\newcommand{\cM}{\mathcal{M}}
\newcommand{\cD}{\mathcal{D}}

\newcommand{\prob}{p}

\newcommand{\argmin}{\operatorname*{argmin}}

\title{\large On Second-Order Scoring Rules for Epistemic Uncertainty Quantification}

\author{
  Viktor Bengs, \, Eyke H\"ullermeier\\
  Institute of Informatics, 
  University of Munich (LMU)\\
  Munich Center for Machine Learning\\
  \texttt{viktor.bengs@lmu.de, eyke@lmu.de} 
  \AND
Willem Waegeman\\
Department of Data Analysis and Mathematical Modeling\\
Ghent University\\
  \texttt{Willem.Waegeman@UGent.be}
}

\begin{document}
	
	\maketitle

\begin{abstract}
It is well known that accurate probabilistic predictors can be trained through empirical risk minimisation with proper scoring rules as loss functions. While such learners capture so-called aleatoric uncertainty of predictions, various machine learning methods have recently been developed with the goal to let the learner also represent its epistemic uncertainty, i.e., the uncertainty caused by a lack of knowledge and data. An emerging branch of the literature proposes the use of a second-order learner that provides predictions in terms of distributions on probability distributions. However, recent work has revealed serious theoretical shortcomings for second-order predictors based on loss minimisation. In this paper, we generalise these findings and prove a more fundamental result: There seems to be no loss function that provides an incentive for a second-order learner to faithfully represent its epistemic uncertainty in the same manner as proper scoring rules do for standard (first-order) learners. As a main mathematical tool to prove this result, we introduce the generalised notion of second-order scoring rules.
\end{abstract}

\section{Introduction}

The representation and quantification of uncertainty in machine learning, most notably of predictive uncertainty in the setting of supervised learning, has recently attracted increasing attention \citep{mpub440}. Going beyond standard probabilistic prediction, various methods have been proposed that seek to distinguish between so-called aleatoric and epistemic uncertainty \citep{mpub272,kend_wu17}. One way to do so it to learn second-order predictors $H: \mathcal{X} \fromto \mathbb{P}(\mathbb{P}(\mathcal{Y}))$ mapping a query instance $\vec{x}$ to a probability distribution on the probability distributions over the outcome space $\mathcal{Y}$. This is motivated as follows: Assuming that outcomes cannot be predicted deterministically, and hence taking a conditional probability $p^* = p^*(\cdot \given \vec{x})$ on $\mathcal{Y}$ as ground-truth, it is natural to train a probabilistic predictor producing estimates $\hat{p} = \hat{p}(\cdot \given \vec{x})$. Such estimates capture aleatoric uncertainty about the actual outcome $y \in \mathcal{Y}$, i.e., inherent randomness that the learner cannot get rid of (even with perfect knowledge about $p^*$, the outcome $y$ remains random to some extent). However, it does not allow the learner to express its epistemic uncertainty, namely, its lack of knowledge about how accurately $\hat{p}$ approximates $p^*$. To capture this uncertainty as well, the learner is allowed to predict a second-order distribution $Q$. In other words, instead of committing to a single (point) prediction $\hat{p}$, the learner assigns probabilities $Q(p)$ to all candidate distributions $p^*$.

%\begin{color}{red}
How to train a second-order predictor $H$ on empirical data in the form of tuples $(\vec{x}, y) \in \mathcal{X} \times \mathcal{Y}$, as commonly assumed in supervised learning? To this end, several authors have proposed extensions of empirical risk minimisation, i.e., to find a predictor $H$ that minimises the (regularised) loss on the training data. Obviously, this requires a second-order loss function $L_2$ that compares second-order predictions with actual outcomes: $L_2(Q , y)$ is the loss suffered by the learner when predicting $Q = H(\vec{x})$ and observing outcome $y$. Different loss functions of this kind have been proposed for classification \citep{sens_ed18,MalininG18,MalininG19,MalininMG20, char_ue20,HuseljicSHK20,KopetzkiCZGG21,tsiligkaridis2021information,Bao2021EvidentialDL,hammam2022predictive} and regression \citep{amini2020deep,Ma2021TrustworthyMR,MalininUnpub,Charpentier2022NaturalPN,Oh_Shin_2022,Shankar2023AAAI}.

Focusing on the classification setting, \citet{bengs2022pitfalls} have shown theoretical shortcomings of second-order loss minimization. In particular, they prove that the second-order loss functions proposed in the literature do not incentivise the learner to predict its epistemic uncertainty in a faithful way. Similar issues have been revealed by \citet{meinert2022unreasonable} for empirical loss minimisation in the regression setting. While criticising specific types of losses, none of these papers strictly excludes the existence of other loss functions that may provide the right incentive for the learner.
%\end{color}

In this paper, we therefore strive for a more general result, which applies to any kind of loss function and to both the classification and regression setting. To this end, we introduce second-order scoring rules as our main mathematical tool. For the case of standard (first-order) probabilistic predictions, it is well known that loss functions in the form of proper scoring rules (such as log-loss in classification and squared error loss in regression) provide exactly the right incentive to the learner: To minimise such a loss in expectation, the learner has to provide unbiased predictions of ground-truth probabilities $p^*(\cdot \given \vec{x})$. 

Transferring this notion from the aleatoric to the epistemic level, we ask the following question: Is there a second-order loss $L_2$ that incentivises the learner to be honest in the sense of predicting $Q = H(\vec{x})$ whenever $Q$ corresponds to its actual belief about the ground-truth distribution $p^*(\cdot \given \vec{x})$? Our main result is again a negative answer to this question: There seems to be no meaningful second-order loss function (scoring rule) incentivising the learner to faithfully reveal its true beliefs on the epistemic level.

\section{Setting and Notation}
We assume a standard supervised learning setting with instance (or feature) space $\cX$, label (or outcome) space $\cY $,  and training data 
$
\cD = \big\{ \big(\vec{x}^{(n)} , y^{(n)} \big) \big\}_{n=1}^N \subset \cX \times \cY \, .
%\cD = \left\{ \left(\vec{x}^{(n)} , y^{(n)} \right) \right\}_{n=1}^N \subset \cX \times \cY \, .
$
Here, $\cY$ can either correspond to a classification task (i.e., $\cY= \{ y_1 , \ldots , y_K \}$ for some $K\in \mathbb{N}_{\geq 2}$) or a regression task (i.e., $\cY= \mathbb{R}$).
Following the classical setting, we also assume that the data is generated i.i.d.\ according to an underlying joint probability $p^*$ on $\cX \times \cY$, i.e., each $z^{(n)}= (\vec{x}^{(n)} , y^{(n)} )$ is a realisation of $Z=(X,Y) \sim p^*$.
Correspondingly, each instance $\vec{x} \in \cX$ is associated with a conditional distribution $p^*( \cdot \given \vec{x})$ on $\cY$, such that $p^*( y \given \vec{x})$ is the probability to observe label $y$ as an outcome given $\vec{x}$. 

Let $\mathbb{P} (\Omega)$ denote the set of probability distributions on the measurable space $(\Omega,\mathcal{A}),$ where $\mathcal{A}$ is a $\sigma$-algebra on $\Omega.$
We write $\mathbb{P}_1(\cY) := \mathbb{P} (\cY)$ for the set of all probability distributions over $\cY$ and $\mathbb{P}_2(\cY) := \mathbb{P}(\mathbb{P} (\cY))$ for the set of all probability distributions over $\mathbb{P}(\cY).$
For sake of convenience, we also define $\mathbb{P}_0(\cY) := \cY.$
We refer to the elements in $\mathbb{P}_1(\cY)$ as first-order distributions, while the elements in $\mathbb{P}_2(\cY)$ are referred to as second-order distributions.
We shall use lowercase letters, e.g.\ $\hat p, p,$ for elements of the former, and uppercase letters, e.g.\ $\hat Q,Q,$ for elements of the latter.
The Dirac measure at $y \in \mathbb{P}_0(\cY)$ is denoted by $\delta_y \in \mathbb{P}_1(\cY)$; likewise, $\delta_p \in \mathbb{P}_2(\cY)$ denotes the Dirac measure at $p \in \mathbb{P}_1(\cY),$ where the underlying space of the Dirac measure should be clear from the context.
Finally, we write $\overline{\R} = [-\infty,\infty]$ for the extended real line.

\paragraph{Learning Predictive First-Order Models}

Suppose a \emph{hypothesis space} $\mathcal{H}_1 \subset \mathbb{P}_1(\cY)^{\cX}=\{ h:\cX \to \mathbb{P}_1(\cY)  \}$ to be given. 
Thus, a hypothesis $h$ in $\mathcal{H}_1$ maps instances $\vec{x}\in\cX$ to probability distributions on outcomes (first-order distributions).
In standard supervised learning, the goal of the learner is to induce a hypothesis (predictive model) with low (first-order) risk 
\begin{equation} \label{eq:risk}
	R_1(h) \defeq \int\limits_{\cX \times \cY} L_1( h(\vec{x}) , y) \, \mathrm{d}p^*(\vec{x} , y) \enspace ,
\end{equation}
where $L_1: \, \mathbb{P}_1 ( \cY) \times \mathcal{Y} \longrightarrow \mathbb{R}$ is a (first-order) loss function.  
The choice of a hypothesis is commonly guided by the empirical risk  
\begin{equation}\label{eq:erisk}
	R_{1,emp}(h) \defeq  N^{-1} \sum\limits_{n=1}^N L_1\left(h(\vec{x}^{(n)}) , y^{(n)}\right) \enspace ,
\end{equation}
i.e., the performance of a hypothesis on the training data. However, since $R_{1,emp}(h)$ is only an estimation of the true risk $R_1(h)$, the empirical risk minimiser 
$$
\hath \defeq \argmin_{h \in \cH_1} R_{1,emp}(h)
$$ 
(or any other predictor) favored by the learner will normally not coincide with the true risk minimiser (Bayes predictor)
%\begin{equation}\label{eq:bayespred}
$h^* \defeq \argmin_{h \in \cH_1} R_1(h)$. % \, .
%\end{equation} 
Correspondingly, there remains (epistemic) uncertainty regarding $h^{*}$ as well as the approximation quality of $\hath$ (in the sense of its proximity to $h^*$) and the predictions
$\hat{p}(\cdot \given \vec{x}) = \hat{h}(\vec{x})$ 
produced by this hypothesis.

\medskip

\begin{example}
	In the classification setting with $\cY= \{ y_1 , \ldots , y_K \}$, appropriate first-order loss functions are the Brier score or the cross-entropy loss:
	\begin{align}
		L_1^{\texttt{Brier}}(p,y) &= \sum_{k=1}^K (p(y_k) - 1_{ \{y_k = y\} })^2 \, , \label{def:Brier_score} \\
		L_1^{\texttt{CE}}(p,y) &= - \sum_{k=1}^K 1_{\{ y_k = y\} } \log(p(y_k)) \, , \label{def:cross_entropy_loss}
	\end{align}
        where $1_{ \{ \cdot\} }$ is the indicator function.
	Both have the appealing property that the optimal hypothesis $h^*(\vec{x})$ coincides with the conditional class distribution $p^*(\cdot \given \vec{x}).$
	Other suitable first-order losses include the spherical score, Winkler's score, or the Beta score. We refer to \citet{gneiting2007strictly} for an overview, who also provide examples of appropriate first-order loss functions for the case of regression (i.e., $\cY=\R$). 
\end{example}

%\subsection{Learning Level-2 Predictors}

\paragraph{Learning Predictive Second-Order Models}

Quite recently, there has been much interest in predictive models of second-order, i.e., mappings from instances $\vec{x}\in\cX$ to probability distributions on probability distributions over the outcomes (second-order distributions).
Formally, a hypothesis space  $\mathcal{H}_2 \subset \mathbb{P}_2(\cY)^{\cX}=\{ H:\cX \to \mathbb{P}_2(\cY)  \}$ is considered. 
Thus, $H(\vec{x})$ assigns a probability to each distribution $p(\cdot \given \vec{x}) \in \mathbb{P}_1(\cY)$, and the more certain the learner about the true conditional distribution $p^*(\cdot \given \vec{x})$, the more concentrated or ``peaked'' $H(\vec{x})$ is. 

	In light of this, the basic idea of \emph{direct} epistemic uncertainty prediction  is to try to learn a second-order predictor in the ``classical'' way through loss minimisation, just like a first-order predictor. 
	Formally, a second-order loss function
	\begin{equation}
		L_2:  \mathbb{P}_2(\cY) \times \cY \fromto \R
	\end{equation}
	is specified, which compares second-order predictions $H(\vec{x})$ with (zero-order) observations $y$, such that minimising $L_2$ on the training data $\cD$ yields a ``good'' second-order predictor.
	Formally, the minimiser $\hat H$ of the empirical risk induced by $L_2,$ i.e., 
	\begin{equation}\label{eq:erisk_level_2}
		R_{2,emp}(H) \defeq  N^{-1} \sum\limits_{n=1}^N L_2\left(H(\vec{x}^{(n)}) , y^{(n)}\right) \enspace ,
	\end{equation}
	over the considered hypothesis space $\mathcal{H}_2$ should provide accurate predictions for a given $\vec{x}$ by means of $\E_{p \sim \hat H(\vec{x})} \E_{Y \sim p}[Y],$ while reporting the second-order (epistemic) uncertainty in a reasonable and faithful manner.	
	Preferably, these properties should be reflected by the true risk minimiser 
	$H^* \defeq \argmin_{H \in \cH_2} R_2(H),$ where 
	\begin{equation} \label{eq:risk_level_2}
		R_2(H) \defeq \int\limits_{\cX \times \cY} L_2( H(\vec{x}) , y) \, \mathrm{d}p^*(\vec{x} , y)  
	\end{equation}
	is the (second-order) risk induced by the loss $L_2.$

	Usually, the hypotheses in $\cH_2$ are mappings from $\cX$ to a parameterized family of second-order distributions.
	More precisely, the image of these mappings is $\mathbb{P}_2(\cM),$ where $\cM$ is some specific parameter space  such that $\mathbb{P}_2(\cM)$ is in fact a strict subset of $\mathbb{P}_2(\cY).$
	Each element $Q \in  \mathbb{P}_2(\cM)$ can be encoded by means of a parameter vector $\mathbf{m} \in \cM,$ i.e., $Q = Q_{\mathbf{m}}.$
	Thus, the hypotheses in $\cH_2$ are encoded by a mapping from instances $\vec{x} \in \cX$ to a parameter vector $\mathbf{m}.$
	In light of this, the second-order distributions $Q \in  \mathbb{P}_2(\cM)$ have in most cases support only on a strict subset of $\mathbb{P}_1(\cY)$  due to the parameterization.
	This support is again usually a parameterized family of first-order distributions $\mathbb{P}_1(\Theta),$ where $\Theta$ is yet another parameter space.

\medskip
	
	\begin{example}[Classification]
		Consider the classification setting with $\cY=\{y_1,\ldots,y_K\}$, where the goal is to learn a predictive second-order model.
	Here, the most commonly used parameterized class of second-order distributions $\mathbb{P}_2(\cM)$ is the set of Dirichlet distributions with parameter space 
 $$\cM = \big\{ \mathbf{m}=(m_1,\ldots,m_K) \, | \, m_i >0, \ i=1,\ldots,K  \big\}$$ 
 having support on the (first-order) categorical distributions $\mathbb{P}_1(\Theta),$ where 
 $$\Theta= \big\{ \vec{\theta} = (\theta_1, \ldots , \theta_K) \in [0,1]^K \, | \, \| \vec{\theta} \|_1 = 1 \big\}$$ 
    (see \citep{sens_ed18,MalininG18,MalininG19,MalininMG20, char_ue20,HuseljicSHK20,KopetzkiCZGG21,tsiligkaridis2021information,Bao2021EvidentialDL,hammam2022predictive}).
		In this case, $\mathbb{P}_1(\cY) = \mathbb{P}_1(\Theta).$ 

	In this setting, losses of the following kind have been suggested:
	\begin{align} \label{def:bayes_losses}
		\begin{split}
			L^{\texttt{Bay}}_2\big( Q,y \big) 
			&= \E_{p \sim Q} \, L_1 \big(p, y \big)  + \lambda \, d_{KL}(Q,Q_0) \, ,
		\end{split}
	\end{align}
	where  $\lambda \geq 0$ is some regularisation parameter,	$ L_1: \mathbb{P}(\cY) \times \cY \to \R$  is some appropriate first-order loss, $ d_{KL}: \mathbb{P}_2(\cY) \times  \mathbb{P}_2(\cY) \to \R$  the KL-divergence, and  $Q_0 $ the uniform distribution on $\mathbb{P}_2(\cY).$
	The idea is that the first component in \eqref{def:bayes_losses} enforces correct predictions, which, however, might favor peaked second-order distributions. 
	Therefore, the second component in \eqref{def:bayes_losses} acts as a countermeasure, since it penalizes deviations from the most non-peaked second-order distribution, namely the uniform distribution.
	\end{example}
	\begin{example}[Regression] \label{example:Regression}
	Consider the regression setting, i.e., $\cY=\R,$  where again the goal is to learn a predictive second-order model.
\citet{amini2020deep} published the pioneering work in this regard, using normal-inverse gamma (NIG) distributions for the parameterized class of second-order distributions $\mathbb{P}_2(\cM)$, such that 
	$$\cM = \big\{ \mathbf{m}=(m_1,\ldots,m_4) \, | \, m_1 \in \R, m_2,m_3-1,m_4>0 \big\} \, .$$
	Note that \citet{amini2020deep} denote $(m_1,m_2,m_3,m_4)$  by $(\gamma,\nu,\alpha,\beta)$.
	Accordingly, this second-order distribution is essentially a distribution over the set of Gaussian distributions, i.e., 
	$$ \mathbb{P}_1(\Theta) = \{ \mathrm{N}(\mu,\sigma^2) \, | \, (\mu,\sigma) \in \Theta  \}, $$
	where $\Theta = \{ (\mu,\sigma) \, | \, \mu\in \R, \sigma > 0 \}$ is the set of (location-scale) parameters of Gaussian distributions.

	The second-order loss function suggested in this regard is 
	\begin{align}\label{def:DER_loss}
		L^{\texttt{DER}}\big( \mathbf{m},y \big) 
		&= L^{\mathrm{t}}\big( \mathbf{m},y \big) + \lambda \cdot \mathrm{PEN}\big( \mathbf{m},y \big),
	\end{align}
	where
	\begin{align} 
	\begin{split}
			L^{\mathrm{t}}\big( \mathbf{m},y \big)  
%			&=  
			%				
			&= \nicefrac12 \log\left( \nicefrac{\pi}{m_2}\right) 
			 - m_3 \log(m_{2,4})  \label{def:neg_log_likeli_student} \\
%			%
			&\quad \ + \left(m_3+\nicefrac{1}{2}\right) \log\left((y-m_1)^2m_2 + m_{2,4}\right)  \\
%			%				
			& \quad + \log\left( \nicefrac{\Gamma(m_3)}{\Gamma(m_3+\frac{1}{2})}\right), \\
             m_{2,4} &:= 2m_4(1+m_2), \\
          \mbox{and \ }   \mathrm{PEN}\big( \mathbf{m}, y \big) &= |m_1 - y|\cdot (m_3 + 2 m_2 ).
		\end{split} 
	\end{align}
	Here, $L^{\mathrm{t}}$ is the negative log-likelihood function of a Student-t distribution with location parameter $m_1$, scale parameter $ 2m_3$ and $ \frac{m_{2,4}}{2 m_2 m_3}$ degrees of freedom.
	Similarly, as for the class of second-order loss functions in the classification setting in \eqref{def:bayes_losses}, the first component in \eqref{def:DER_loss} should enforce correct predictions, while the second component prevents the use of too peaked second-order distributions.
	For an NIG distribution, this can be achieved by penalizing too large values of $m_2$ and $m_3,$ as the variance terms of an NIG distribution depend reciprocally on these, respectively.

Follow-up papers on \citep{amini2020deep} adjust the loss in (\ref{def:DER_loss}) by replacing the negative log-likelihood with squared loss \citep{Oh_Shin_2022}, by changing the regularization term \citep{Shankar2023AAAI}, or by considering a mixture of NIG distributions instead of a single one \citep{Ma2021TrustworthyMR}. Another line of research modifies  (\ref{def:bayes_losses}) to the regression setting \citep{MalininUnpub,Charpentier2022NaturalPN}. 

	\end{example}
% \end{color}
	%

\section{Scoring Rules for First-order Losses} \label{section:level-1-scoring-rules}

In this section, we review the essential concepts of proper scoring rules, which is a class of (first-order) loss functions incentivizing the learner to predict probabilities in an unbiased way.
Here, unbiased means that the learner minimises expected loss if (and only if) it predicts the true (conditional) probability distribution.

\begin{definition} \label{def:level_1_scoring_rule}
	 A (first-order) scoring rule $	S_1:   \mathbb{P}_1(\cY) \times  \mathbb{P}_1(\cY)  \to \overline{\R}$ based on the (first-order) loss $L_1: \mathbb{P}_1(\cY) \times \cY \to \overline{\R},$ such that $L_1(p,\cdot)$ is $\mathbb{P}_1(\cY)$-quasi-integrable\footnote{A function defined on $\cY$ and taking values in the extended real line is $ \mathbb{P}_1(\cY)$-quasi-integrable if it is measurable w.r.t.\ $\mathcal{A}$ and is quasi-integrable w.r.t.\ all $p \in \mathbb{P}_1(\cY)$.} for all $p \in \mathbb{P}_1(\cY)$,  is given for all $\hat p,p \in \mathbb{P}_1(\cY)$ by 
	 \begin{align} \label{def_level_1_score}
	 	S_1(\hat p, p) = \E_{Y \sim p}[ L_1(\hat p, Y) ].
	 \end{align}  
\end{definition}
The second component (i.e., $p$) of a scoring rule represents the target distribution or ground-truth $p^*( \cdot  \given \vec{x})$, while the first component  (i.e., $\hat p$) represents the predicted distribution, e.g.\ $\hat{p}(\cdot \given \vec{x}) = \hat{h}(\vec{x})$.
Thus, integrating \eqref{def_level_1_score} over the distribution of the instances $\vec{x}$ leads to the (first-order) risk in \eqref{eq:risk}, i.e., 
$R_1(\hat{h})= \int_\cX S_1( \hat{h}(\vec{x}), p(\vec{x}|y))\,  \mathrm{d}p_{X}(\vec{x}),$
where $p_{X}$  denotes the distribution over the instances.

Note that in the literature it is more common to refer to the loss function $L_1$ as the scoring rule, while $S_1$ is referred to as the expected score \citep{gneiting2007strictly,ovcharov2018proper}. 
However, to make the distinction between loss function and scoring rule even clearer, we will stick with the notion in \cref{def:level_1_scoring_rule}.

Structural properties imposed on a scoring rule allow to assess the goodness-of-fit between the distributions by means of the scoring rule. 
%propriety if predicting the target distribution gives the best expectation and strictly proper if no other prediction can achieve this value.
%
\begin{definition} \label{definition_level_1_proper}
	A (first-order) scoring rule $S_1$ is called 
	\begin{itemize}	
 %[noitemsep,topsep=0pt,leftmargin=4mm]
%		
		\item \emph{regular} w.r.t.\ the class $\mathbb{P}_1(\cY)$ if $S_1(\hat p, p) \in \R$ for  all $\hat p, p \in \mathbb{P}_1(\cY)$ except possibly that  $S_1(\hat p, p)= \infty$ if $\hat p \neq p.$
		\item 	\emph{proper} w.r.t.\ the class $\mathbb{P}_1(\cY)$ if
		\begin{align} \label{def_level_1_proper}
			S_1(\hat p, p) \geq S_1(p, p) \quad \mbox{for all $\hat p, p \in \mathbb{P}_1(\cY).$}	
		\end{align}
		\item 	\emph{strictly proper} w.r.t.\ the class $\mathbb{P}_1(\cY)$ if it is proper and
		\begin{align} \label{def_level_1_strictly_proper}
			S_1(\hat p, p) > S_1(p, p) \quad \mbox{for all $\hat p \neq p.$}	
		\end{align}
	\end{itemize}
\end{definition}
Regular scoring rules assign finite scores, except that a prediction might receive an infinite score, e.g., if an event claimed to be impossible is realized.
For proper scoring rules predicting the target distribution gives the best expectation, while strictly proper scoring rules ensure that no other prediction can achieve this value.
%
%From the perspective of uncertainty awareness, \citet{gneiting2007strictly}'s remark is instructive in this regard
%
From an uncertainty awareness perspective, the remark by \citet{gneiting2007strictly} in this regard is enlightening: ``If $S$ is proper, then the forecaster who wishes to maximize the expected score is encouraged to be honest and to volunteer his or her true beliefs.''\footnote{Note that \citet{gneiting2007strictly} consider the scenario of maximizing the score instead of minimising it as we do in this paper, which is more in line with the standard approach in machine learning.}  
%
%``encouraging honest quotes by the forecaster.'' \cite{gneiting2007strictly} 
%
Or, similarly, the one by \citet{ovcharov2018proper} regarding (strictly) proper scoring rules: ``By being maximized in expectation at the true prediction, they incentivize a forecaster to truthfully report his private information.''\footnote{Note that \citet{ovcharov2018proper} also considers maximizing the score instead of minimising it as we do in this paper.} 

\section{Scoring Rules for Second-order Losses} \label{section:level-2-scoring-rules}

Inspired by the uncertainty awareness perspective of first-order scoring rules, we ask whether one can define a similar scoring rule for second-order losses $L_2: \mathbb{P}_2(\cY) \times \cY \to \overline{\R}.$ 
Apparently, such a scoring rule needs to be a mapping from $\mathbb{P}_2(\cY) \times  \mathbb{P}_2(\cY)$ to  $\overline{\R}$ to maintain the same spirit.
\begin{definition} \label{def_score_rule_level_2}
	A (second-order) scoring rule $	S_2:   \mathbb{P}_2(\cY) \times  \mathbb{P}_2(\cY)  \to \overline{\R}$ based on the (second-order) loss $L_2: \mathbb{P}_2(\cY) \times \cY \to \overline{\R},$ such that $L_2(Q,\cdot)$ is $\mathbb{P}_2(\cY)$-quasi-integrable for all $Q \in \mathbb{P}_2(\cY)$,  is given for all $\hat Q,Q \in \mathbb{P}_2(\cY)$ by 
	\begin{align} \label{def_level_2_score}
		S_2(\hat Q, Q) = \E_{p \sim Q} \big[\E_{Y \sim p}[ L_2(\hat Q, Y) ]\big].
	\end{align}  
\end{definition}
Compared to \eqref{def_level_1_score} the definition in \eqref{def_level_2_score} involves an additional expectation w.r.t.\ the second-order distribution of the second component (i.e., $Q$).
Again, the second component represents a target (second-order) distribution, while the first component  (i.e., $\hat Q$) represents a predicted (second-order) distribution for a given instance $\vec{x}$, e.g.\ $\hat{Q}(\vec{x}) = \hat{H}(\vec{x})$.
However, unlike for first-order distributions, there is nothing like a ground-truth second-order distribution.
Nevertheless, the above definition of a second-order scoring rule allows a similar close connection to the second-order risk as for first-order scoring rules:
Suppose one would know apriori that the underlying conditional distributions, considered as random functions, are distributed according to a known second-order distribution $Q_{\vec{x}}$ (varying with the instances $\vec{x}$). 
Then, $S_2$ as in \cref{def_score_rule_level_2} relates to the risk induced by $L_2$ (see \eqref{eq:risk_level_2}) as follows:
\begin{align} \label{relation_level_2_score_risk}
	\begin{split}
		\int_\cX S_2&(\hat H(\vec{x}),Q_{\vec{x}}) \,  \mathrm{d}p_{X}(\vec{x}) 
		= \int_\cX \int_{\mathbb{P}_1(\cY)} R_2(\hat H(\vec{x}),p) \, \mathrm{d}Q_{\vec{x}}(p) \mathrm{d}p_{X}(\vec{x}), 
	\end{split}
\end{align}
where $R_2(\hat H(\vec{x}),p) = \int_{\cY} L_2(\hat H(\vec{x}),y) \, \mathrm{d}p(y)$ is the conditional risk of $\hat H$ if $p \in \mathbb{P}_1(\cY)$ is the ground-truth conditional distribution.
Thus, a risk minimising learner is automatically encouraged to minimise the second-order scoring rule in case the target second-order distribution $Q$ is known.

This connection is perhaps even more clarified in the case of learning without an instance space\footnote{Equivalently, we may assume an instance space $\cX = \{x_0\}$ consisting of only a single instance, which is observed over and over again (and can therefore be ignored, as it does not carry any
information).}, where the latter equation \eqref{relation_level_2_score_risk} boils down to
$$	S_2(\hat H,Q) = \int_{\mathbb{P}_1(\cY)} R_2(\hat H,p) \, \mathrm{d}Q(p).	$$
Akin to the first-order case (see \cref{definition_level_1_proper}) we can specify structural properties of a second-order scoring rule.
\begin{definition}
	A (second-order) scoring rule $S_2$ is called 
	\begin{itemize}
		%[noitemsep,topsep=0pt,leftmargin=4mm]
		%		
		\item \emph{regular} w.r.t.\ the class $\mathbb{P}_2(\cY)$ if $S_2(\hat Q, Q) \in \R$ for  any $\hat Q, Q \in \mathbb{P}_2(\cY)$ except possibly that  $S_2(\hat Q, Q)= \infty$ if $\hat Q \neq Q.$
		\item 	\emph{proper} w.r.t.\ the class $\mathbb{P}_2(\cY)$ if
		\begin{align} \label{def_level_2_proper}
			S_2(\hat Q, Q) \geq S_2(Q, Q) \quad \mbox{for all $\hat Q, Q \in \mathbb{P}_2(\cY).$}	
		\end{align}
		\item 	\emph{strictly proper} w.r.t.\ the class $\mathbb{P}_2(\cY)$ if it is proper and
		\begin{align} \label{def_level_2_strictly_proper}
			S_2(\hat Q, Q) > S_2(Q, Q) \quad \mbox{for all $\hat Q \neq Q.$}	
		\end{align}
	\end{itemize}
\end{definition}
Given the similarity of \eqref{def_level_2_proper} (or \eqref{def_level_2_strictly_proper}) to \eqref{def_level_1_proper} (or \eqref{def_level_1_strictly_proper}) as well as the similar relationship of second-order proper scoring rules to the second-order risk as in the case of first-order, we can transfer the remarks from above for proper first-order scoring rules regarding uncertainty quantification to the second-order. 
That is, if $S_2$ is proper, then the leaner that wishes to minimise the expected score is encouraged to be honest and to volunteer its true beliefs (represented by the second component).
In other words, if the second-order distribution $Q$ is the (subjective) belief of the learner, then the best score is only obtained by using $\hat Q = Q$ as a prediction, i.e., sticking to its own belief.

In the following we derive a characterization of (strictly) proper second-order scoring rules similar to those known for (strictly) proper first-order scoring rules (see Theorem 1 in \cite{gneiting2007strictly}).
To this end, we need the definition of a concave functional on $\mathbb{P}_2(\cY)$ and its induced supertangent (or supergradient).
\begin{definition} \label{def:concavity_supertangent}
	(i) A function $G: \mathbb{P}_2(\cY) \to \R$ is \emph{concave} if for all $\lambda \in[0,1], Q,\widetilde{Q} \in \mathbb{P}_2(\cY)$ it holds that
	$$	G( \lambda Q  + (1-\lambda) \widetilde{Q}  )	\geq \lambda G(  Q)   + (1-\lambda) G(\widetilde{Q}  ).		$$
	It is \emph{strictly concave} if the latter holds with equality only in the case where $Q = \widetilde{Q}.$

	(ii) A function $G^*(Q,\cdot): \cY \to \overline{\R}$ is a supertangent of  $G$ at $\widetilde{Q} \in \mathbb{P}_2(\cY)$ if it is integrable w.r.t.\ $\widetilde{Q},$ quasi-integrable w.r.t.\ to all $Q\in \mathbb{P}_2(\cY)$ and for all $Q \in \mathbb{P}_2(\cY)$ it holds that
	\begin{align} \label{supertangent_property}
	\begin{split}
		G(Q) 
		&\leq G(\widetilde{Q}) 
		+ \int_{\mathbb{P}_1(\cY)} \int_{\cY} G^*(\widetilde{Q},y) \, \mathrm{d}p(y)\, \mathrm{d} (Q - \widetilde{Q})(p).
	\end{split}
	\end{align}
	In case the inequality in \eqref{supertangent_property} is strict for $Q \neq \widetilde{Q},$ then $G^*$ is called a strict supertangent of $G.$ 
\end{definition}
Equipped with this we can show the following characterization of (strictly) proper second-order scoring rules.
\begin{theorem} \label{theorem_level_2_scoring_rule_characterization}
	A scoring rule $S_2$  based on the (second-order) loss $L_2: \mathbb{P}_2(\cY) \times \cY \to \overline{\R}$  is (strictly) proper iff there exists a (strictly) concave function  $G_2: \mathbb{P}_2(\cY) \to \R$ such that
	\begin{align} \label{characterization_proper_level_2_scoring_rule}
		\begin{split}
			L_2(Q,y) 
			&= G_2(Q) + G_2^*(Q,y) 
			 - \int_{\mathbb{P}_1(\cY)} \int_{\cY} G_2^*(Q,y) \, \mathrm{d}p(y)\, \mathrm{d} Q(p) 
		\end{split}
	\end{align}
	for all $Q\in \mathbb{P}_2(\cY)$ and $y\in \cY,$ where $G_2^*(Q,\cdot): \cY \to \overline{\R}$ is a supertangent of $G$ at $Q.$
\end{theorem}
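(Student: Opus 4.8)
The plan is to transfer the classical argument behind Theorem~1 of \citet{gneiting2007strictly} to the second-order setting, the crucial point being that the additional layer of expectation in \eqref{def_level_2_score} does \emph{not} destroy affinity of $S_2$ in its second argument. Writing $r(\hat Q, p) \defeq \int_{\cY} L_2(\hat Q, y)\, \mathrm{d}p(y)$, we have $S_2(\hat Q, Q) = \int_{\mathbb{P}_1(\cY)} r(\hat Q, p)\, \mathrm{d}Q(p)$, which for fixed $\hat Q$ is affine (linear over mixtures) in $Q$. With this observation, the first-order proof goes through essentially verbatim with $\mathbb{P}_1(\cY)$ replaced by $\mathbb{P}_2(\cY)$ and the role of a Dirac observation $\delta_y$ played by the second-order Dirac $\delta_{\delta_y} \in \mathbb{P}_2(\cY)$, for which $\int_{\mathbb{P}_1(\cY)} \int_{\cY} G_2^*(Q,y')\,\mathrm{d}p(y')\,\mathrm{d}\delta_{\delta_y}(p) = G_2^*(Q,y)$.

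For necessity, assume $S_2$ is proper and set $G_2(Q) \defeq S_2(Q,Q)$. By propriety $G_2(Q) = \inf_{\hat Q} S_2(\hat Q, Q)$, an infimum of functions affine in $Q$, hence $G_2$ is concave; equivalently, for $Q = \lambda Q_1 + (1-\lambda)Q_2$ affinity of $S_2(Q,\cdot)$ gives $G_2(Q) = \lambda S_2(Q,Q_1) + (1-\lambda)S_2(Q,Q_2) \ge \lambda G_2(Q_1) + (1-\lambda)G_2(Q_2)$, and strict propriety upgrades both bounds to strict inequalities when $\lambda \in (0,1)$ and $Q_1 \neq Q_2$, yielding strict concavity. (Real-valuedness of $G_2$ is where the regularity of $S_2$, together with quasi-integrability of $L_2$, enters.) Then take the candidate supertangent at $Q$ to be $G_2^*(Q,\cdot) \defeq L_2(Q,\cdot)$ itself; its integrability w.r.t.\ $Q$ and quasi-integrability follow from the standing assumptions on $L_2$ and finiteness of $G_2(Q)$, and the supertangent inequality \eqref{supertangent_property} becomes $G_2(\widetilde Q) = S_2(\widetilde Q,\widetilde Q) \le S_2(Q,\widetilde Q) = G_2(Q) + \int_{\mathbb{P}_1(\cY)}\int_{\cY} G_2^*(Q,y)\,\mathrm{d}p(y)\,\mathrm{d}(\widetilde Q - Q)(p)$, which is exactly propriety (and strict propriety gives a strict supertangent). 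Finally, with this choice $\int_{\mathbb{P}_1(\cY)}\int_{\cY} G_2^*(Q,y)\,\mathrm{d}p(y)\,\mathrm{d}Q(p) = S_2(Q,Q) = G_2(Q)$, so the right-hand side of \eqref{characterization_proper_level_2_scoring_rule} collapses to $G_2(Q) + L_2(Q,y) - G_2(Q) = L_2(Q,y)$.

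For sufficiency, suppose $G_2$ is concave with supertangent $G_2^*$ and $L_2$ is given by \eqref{characterization_proper_level_2_scoring_rule}. Setting $T(\hat Q, Q) \defeq \int_{\mathbb{P}_1(\cY)}\int_{\cY} G_2^*(\hat Q, y)\,\mathrm{d}p(y)\,\mathrm{d}Q(p)$, which is linear in $Q$, and taking the double expectation in \eqref{def_level_2_score} yields $S_2(\hat Q, Q) = G_2(\hat Q) + T(\hat Q, Q) - T(\hat Q, \hat Q)$; in particular $S_2(Q,Q) = G_2(Q)$. Hence $S_2(\hat Q, Q) - S_2(Q,Q) = G_2(\hat Q) - G_2(Q) + \int_{\mathbb{P}_1(\cY)}\int_{\cY} G_2^*(\hat Q,y)\,\mathrm{d}p(y)\,\mathrm{d}(Q - \hat Q)(p) \ge 0$ by the supertangent property of $G_2^*$ at $\hat Q$ applied to $Q$, i.e.\ $S_2$ is proper; when $G_2$ is strictly concave every supertangent is automatically a strict one (a one-line midpoint-concavity argument: if equality held at some $Q \neq \hat Q$, evaluating the tangent affine function at $(Q+\hat Q)/2$ would contradict strict concavity), so the inequality is strict for $\hat Q \neq Q$ and $S_2$ is strictly proper.

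I expect the main difficulty to be measure-theoretic bookkeeping rather than conceptual: justifying the interchange of the iterated integrals $\E_{p\sim Q}\E_{Y\sim p}$ when expanding $S_2$ (joint measurability and quasi-integrability of $(p,y)\mapsto L_2(\hat Q,y)$, via Tonelli/Fubini), treating the signed measure $Q - \widetilde Q$ in \eqref{supertangent_property} carefully so that the linear functional $T(\hat Q,\cdot)$ is well defined and additive on it, and identifying precisely which regularity/integrability hypothesis on $S_2$ (or $L_2$) guarantees that $G_2$ takes values in $\R$ rather than $\overline{\R}$. I would isolate these points as short lemmas, or fold the required hypothesis into the notion of a scoring rule, so that the algebraic skeleton above remains transparent.
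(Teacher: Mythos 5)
Your proposal is correct and follows the same skeleton as the paper's proof: both directions hinge on the identifications $G_2(Q)=S_2(Q,Q)$ and $G_2^*(Q,\cdot)=L_2(Q,\cdot)$, and on observing that the supertangent inequality \eqref{supertangent_property} with this choice is literally the propriety inequality. The one place you genuinely diverge is in establishing concavity of $G_2$ in the necessity direction: the paper proves and invokes an auxiliary lemma (\cref{lemma_supertangent_concave_implication}) stating that a function admitting a supertangent at every point is concave, whereas you derive concavity directly from affinity of $S_2(\hat Q,\cdot)$ in its second argument, writing $G_2$ as an infimum of affine functionals (equivalently, expanding $S_2(Q,\lambda Q_1+(1-\lambda)Q_2)$ and applying propriety termwise). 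Your route is the one in \citet{gneiting2007strictly} and is arguably more economical since it avoids the separate lemma; the paper's lemma-based route has the advantage of making the ``supertangents everywhere $\Rightarrow$ concave'' implication explicit and reusable. Two further points in your favour: your midpoint argument showing that every supertangent of a strictly concave function is automatically strict fills in a step the paper handles only with a one-line remark, and your closing paragraph correctly identifies the measure-theoretic loose ends (joint quasi-integrability, the signed measure $Q-\widetilde Q$, finiteness of $G_2$) that the paper likewise leaves implicit.
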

This theorem states in essence that a second-order scoring rule $S_2$ induced by a second-order loss $L_2$ is (strictly) proper if and only if $G_2(\cdot)=S_2(\cdot,\cdot)$ is (strictly) concave and $L_2(Q,\cdot)$ is a (strict) supertangent of $G_2$ at $Q$ for all $Q \in \mathbb{P}_2(\cY).$

Another characterization of (strictly) proper scoring rules can be derived by means of (strictly) order sensitive functions \citep{nau1985should,ovcharov2018proper}.
\begin{definition} \label{def:order-sensitive}
	A function $S: \mathbb{P}_2(\cY) \times \mathbb{P}_2(\cY) \to \overline{\R}$ is (strictly) order sensitive if the function
	\begin{align} \label{order-sensitive-function}
		\begin{split}
			f:&[0,1] \to \overline{\R} \\
			&\lambda \mapsto S( (1-\lambda)Q' + \lambda Q, Q  )
		\end{split}
	\end{align}
	is (strictly) monotonically decreasing for all $Q,Q' \in \mathbb{P}_2(\cY).$
\end{definition}
If $S$ is a (second-order) scoring rule, this property states that the score increases steadily as one moves away from the target distribution.
Unsurprisingly, there is a close connection between (strict) propriety and (strict) order sensitivity of a scoring rule, as shown in the following theorem.
\begin{theorem} \label{theorem_level_2_scoring_rule_characterization_2}
	A scoring rule $S_2$  based on the (second-order) loss $L_2: \mathbb{P}_2(\cY) \times \cY \to \overline{\R}$  is (strictly) proper iff $S_2$ is (strictly) order sensitive.
\end{theorem}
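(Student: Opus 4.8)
The plan is to establish the two implications separately, carrying the ``strict'' and ``non-strict'' versions in parallel throughout. The direction ``order sensitive $\Rightarrow$ proper'' is immediate: if the map $f(\lambda) = S_2((1-\lambda)Q' + \lambda Q, Q)$ is monotonically decreasing on $[0,1]$ for all $Q,Q'$, then comparing its endpoint values gives $S_2(Q', Q) = f(0) \ge f(1) = S_2(Q,Q)$, which is propriety; and if $f$ is moreover strictly decreasing for $Q\neq Q'$, this inequality is strict for $Q'\neq Q$, i.e.\ $S_2$ is strictly proper.

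For the converse, the structural fact to exploit is that, for a fixed prediction $\hat Q$, the map $Q \mapsto S_2(\hat Q, Q) = \int_{\mathbb{P}_1(\cY)}\big(\int_\cY L_2(\hat Q, y)\,\mathrm{d}p(y)\big)\,\mathrm{d}Q(p)$ is affine in $Q$, being an integral against the measure $Q$. Fixing $Q,Q'$ and writing $Q_\lambda := (1-\lambda)Q' + \lambda Q$, the function $\phi_{\hat Q}(\mu) := S_2(\hat Q, Q_\mu)$ is therefore affine in $\mu\in[0,1]$, say $\phi_{\hat Q}(\mu) = S_2(\hat Q, Q') + \mu\, s_{\hat Q}$ with slope $s_{\hat Q} := S_2(\hat Q, Q) - S_2(\hat Q, Q')$. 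Propriety says $\phi_{\hat Q}(\mu) \ge \phi_{Q_\mu}(\mu)$ for all $\hat Q$ and $\mu$, with equality at $\hat Q = Q_\mu$; hence each $\phi_{Q_\mu}$ is a supporting line at $\mu$ of the function $\psi(\mu) := S_2(Q_\mu, Q_\mu) = \min_{\hat Q}\phi_{\hat Q}(\mu)$, which is concave as a pointwise infimum of affine functions and, by \cref{theorem_level_2_scoring_rule_characterization}, real-valued (it equals $G_2(Q_\mu)$). The slopes are then non-increasing in $\mu$: for $\lambda_1 < \lambda_2$, adding the two propriety inequalities $\phi_{Q_{\lambda_1}}(\lambda_2) \ge \psi(\lambda_2)$ and $\phi_{Q_{\lambda_2}}(\lambda_1) \ge \psi(\lambda_1)$ and simplifying gives $(\lambda_2 - \lambda_1)\big(s_{Q_{\lambda_1}} - s_{Q_{\lambda_2}}\big) \ge 0$, i.e.\ $s_{Q_{\lambda_1}} \ge s_{Q_{\lambda_2}}$.

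Monotonicity of $f$ now follows by a short computation: since $f(\lambda) = \phi_{Q_\lambda}(1) = \psi(\lambda) + s_{Q_\lambda}(1-\lambda)$, the supporting-line bound $\psi(\lambda_2) \le \phi_{Q_{\lambda_1}}(\lambda_2) = \psi(\lambda_1) + s_{Q_{\lambda_1}}(\lambda_2 - \lambda_1)$ gives, for $\lambda_1 < \lambda_2$,
$$ f(\lambda_1) - f(\lambda_2) \;\ge\; \big(s_{Q_{\lambda_1}} - s_{Q_{\lambda_2}}\big)(1-\lambda_2) \;\ge\; 0 , $$
so $S_2$ is order sensitive. In the strict case, if $S_2$ is strictly proper and $Q\neq Q'$ then $Q_{\lambda_1}\neq Q_{\lambda_2}$ for $\lambda_1\neq\lambda_2$, so the two propriety inequalities used above are strict, forcing $s_{Q_{\lambda_1}} > s_{Q_{\lambda_2}}$ and hence $f(\lambda_1) > f(\lambda_2)$ whenever $\lambda_2 < 1$; the boundary case $\lambda_2=1$ is covered directly by strict propriety since $Q_{\lambda_1}\neq Q = Q_1$. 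Thus $f$ is strictly decreasing, i.e.\ $S_2$ is strictly order sensitive.

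I expect the only real difficulty to be measure-theoretic bookkeeping stemming from $L_2$ being $\overline{\R}$-valued: one must check that the affine maps $\phi_{\hat Q}$ and, in particular, the slopes $s_{\hat Q} = S_2(\hat Q, Q) - S_2(\hat Q, Q')$ are well defined, never of the form $\infty-\infty$. The $\mathbb{P}_2(\cY)$-quasi-integrability of $L_2$ makes each $S_2(\hat Q, Q)$ well defined in $\overline{\R}$, and \cref{theorem_level_2_scoring_rule_characterization} makes the diagonal values $\psi(\mu)=S_2(Q_\mu,Q_\mu)$ finite; handling the remaining (at most one-sided) infinite off-diagonal scores proceeds exactly as in the first-order treatments of \citet{gneiting2007strictly} and \citet{ovcharov2018proper}. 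Alternatively, the whole converse can be phrased directly through \cref{theorem_level_2_scoring_rule_characterization}, writing $S_2(\hat Q, Q) = G_2(\hat Q) + \int_{\mathbb{P}_1(\cY)}\int_\cY G_2^*(\hat Q,y)\,\mathrm{d}p(y)\,\mathrm{d}(Q-\hat Q)(p)$ and lower-bounding $f(\lambda_1)-f(\lambda_2)$ using the supertangent inequalities of $G_2$ at $Q_{\lambda_1}$ and $Q_{\lambda_2}$.
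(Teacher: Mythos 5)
Your proof is correct, and the ``order sensitive $\Rightarrow$ proper'' direction is identical to the paper's (compare $f(0)$ with $f(1)$). For the converse, both arguments rest on the same structural fact --- that $S_2(\hat Q,\cdot)$ is affine (convex-linear) in its second argument --- but the executions differ. The paper applies propriety once at the intermediate point $\widetilde{Q}=(1-\lambda)Q'+\lambda Q$, expands $S_2(Q',\widetilde{Q})\geq S_2(\widetilde{Q},\widetilde{Q})$ by linearity, and rearranges to get $S_2(Q',Q)\geq S_2(\widetilde{Q},Q)$, i.e.\ only $f(0)\geq f(\lambda)$; full monotonicity then follows implicitly from the fact that this holds for \emph{every} pair $(Q,Q')$, so one can restart the segment at $Q_{\lambda_1}$ --- a reparametrisation step the paper does not spell out. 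You instead set up the supporting-line picture for $\psi(\mu)=S_2(Q_\mu,Q_\mu)$, derive monotonicity of the slopes $s_{Q_\lambda}$ from two propriety inequalities, and obtain $f(\lambda_1)\geq f(\lambda_2)$ for arbitrary $\lambda_1<\lambda_2$ directly; this is longer but closes that small gap explicitly and handles the strict case cleanly (including the boundary $\lambda_2=1$). Both routes share the same measure-theoretic caveat about $\overline{\R}$-valued scores and possible $\infty-\infty$ cancellations in the slopes, which you at least flag while the paper passes over it silently; your optional reformulation via the supertangent representation of \cref{theorem_level_2_scoring_rule_characterization} would be one way to make that rigorous.
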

Finally, the following property of (strictly) proper second-order scoring rules is useful as it allows to normalize the scores if necessary.
\begin{lemma} \label{lemma_scaling_l2_losses}
	If $L_2: \mathbb{P}_2(\cY) \times \cY \to \overline{\R}$  induces a (strictly) proper scoring rule $S_2,$ then $\tilde L_2(Q,y) = c L_2(Q,y) + g(y)$ for any constant $c>0$ and $\mathbb{P}_2(\cY)$-integrable function $g:\cY \to \R$ induces a (strictly) proper scoring rule $\tilde S_2.$
\end{lemma}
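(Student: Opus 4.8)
The plan is to unfold the definition of the scoring rule induced by $\tilde L_2$ (\cref{def_score_rule_level_2}) and to observe that the additive perturbation $g$ contributes a quantity that depends on the target second-order distribution $Q$ but not on the prediction $\hat Q$; propriety (strict or not) then transfers from $S_2$ to $\tilde S_2$ after rescaling by the positive constant $c$.

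First I would check that $\tilde S_2$ is well-defined, i.e.\ that $\tilde L_2(Q,\cdot) = c\,L_2(Q,\cdot) + g$ is $\mathbb{P}_2(\cY)$-quasi-integrable for every $Q \in \mathbb{P}_2(\cY)$, which is the requirement in \cref{def_score_rule_level_2}. This holds because $L_2(Q,\cdot)$ is $\mathbb{P}_2(\cY)$-quasi-integrable by hypothesis, multiplication by $c>0$ preserves quasi-integrability, and adding the $\mathbb{P}_2(\cY)$-integrable (hence, in particular, finitely integrable) function $g$ cannot destroy it: with respect to any outcome marginal at most one of the positive and negative parts of $c\,L_2(Q,\cdot)$ has infinite integral, and the finite contribution of $g$ leaves this dichotomy intact. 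Thus $\tilde S_2(\hat Q, Q) \in \overline{\R}$ is well-defined through \eqref{def_level_2_score}.

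Next I would compute, using linearity of the iterated expectation in \eqref{def_level_2_score}, that for all $\hat Q, Q \in \mathbb{P}_2(\cY)$
\[
\tilde S_2(\hat Q, Q)
= \E_{p \sim Q}\big[\E_{Y \sim p}[\,c\,L_2(\hat Q, Y) + g(Y)\,]\big]
= c\, S_2(\hat Q, Q) + r(Q),
\]
where $r(Q) := \E_{p \sim Q}\big[\E_{Y \sim p}[\,g(Y)\,]\big]$ is finite by $\mathbb{P}_2(\cY)$-integrability of $g$ and, crucially, does not depend on $\hat Q$. Subtracting the value at $\hat Q = Q$ yields
\[
\tilde S_2(\hat Q, Q) - \tilde S_2(Q, Q) = c\,\big(S_2(\hat Q, Q) - S_2(Q, Q)\big)
\quad\text{for all }\hat Q, Q \in \mathbb{P}_2(\cY).
\]
Since $c>0$, the right-hand side is $\geq 0$ whenever $S_2$ is proper (so $\tilde S_2$ is proper), and $>0$ for every $\hat Q \neq Q$ whenever $S_2$ is strictly proper (so $\tilde S_2$ is strictly proper), which is the claim.

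I expect the only mildly delicate part to be the quasi-integrability bookkeeping in the well-definedness step — ensuring that the perturbation $g$ does not spoil the property that makes the induced scoring rule well-defined in the extended reals — but this is routine once the intended meaning of ``$\mathbb{P}_2(\cY)$-integrable'' is made explicit (measurable with respect to $\mathcal{A}$ and integrable with respect to the outcome marginal $\int_{\mathbb{P}_1(\cY)} p(\cdot)\,\mathrm{d}Q(p)$ of every $Q \in \mathbb{P}_2(\cY)$). Everything else is an immediate consequence of linearity of expectation and of $c > 0$.
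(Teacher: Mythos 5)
Your proof is correct: the decomposition $\tilde S_2(\hat Q, Q) = c\,S_2(\hat Q, Q) + r(Q)$ with $r(Q)$ finite and independent of the prediction $\hat Q$ is exactly the intended argument, and the order-preservation under multiplication by $c>0$ and addition of a finite prediction-independent term immediately transfers (strict) propriety. The paper in fact states this lemma without proof, so your write-up — including the quasi-integrability bookkeeping, which is the only point requiring any care — supplies precisely the routine verification the authors omitted.
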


\section{Non-existence of Proper Second-order Scoring Rules} \label{section:no-proper-level-2-scoring-rules}
%\allowdisplaybreaks

In this section, we use the characterizations of (strictly) proper second-order scoring rules derived above to show negative results regarding the existence of a reasonable second-order loss $L_2$ such that the induced second-order scoring rule is (strictly) proper.
Here, reasonable refers to loss functions that are desired from an optimization perspective, i.e., almost continuous, as well as an uncertainty penalization perspective, which we shall discuss in more detail after each theoretical result.

\subsection{Classification}
We start with the classification setting, i.e., $\cY=\{y_1,\ldots,y_K\}$ for some $K\in \mathbb{N}_{\geq 2}.$ 
Note that any probability distribution $p$ on $\cY$ is characterized by a probability mass function, which we shall also denote by $p.$ 
\begin{theorem} \label{theorem_no_proper_level_2_scoring_rule_classification}
	%	
%	Let $\cY=\{y_1,\ldots,y_K\}$ for some $K\in \mathbb{N}_{\geq 2}.$ 
	%	
%	Then, there exists no loss function $L_2:  \mathbb{P}_2(\cY) \times \cY \to \R$ such that the induced second-order scoring rule $S_2$ is proper if either of the following holds for $L_2:$
	There exists no loss function $L_2:  \mathbb{P}_2(\cY) \times \cY \to \R$ such that the induced second-order scoring rule $S_2$ is proper if either of the following holds for $L_2:$
	\begin{enumerate}
		[label=(\roman*)]
%		
%		\item there exist $y\in \cY$  and $\overline{Q} \in \mathbb{P}_2(\cY)$ such that
%%		
%		\begin{align} \label{no_proper_level_2_scoring_rule_condition_1}
%%			
%			L_2(\overline{Q},y) < L_2(\delta_{\delta_y}, y).
%%			\tilde p(y) \leq \Big(1 + \frac{  L_2(Q',y)  -  L_2(\delta_{\tilde p},y) }{L_2(\delta_{\tilde p},y')  - L_2(Q',y')   }  \Big)^{(-1)}
%%			
%%			\frac{ L_2(\delta_p(y),y')  - L_2(Q',y')  }{L_2(\delta_p(y),y')  - L_2(Q',y')  +  L_2(Q',y)  -  L_2(\delta_p(y),y)}
%%			
%		\end{align}
%		
		\item  $L_2(\cdot,y'')$ is almost continuous for all $y''\in \cY$ and for all $y\in \cY,$ $Q,\overline{Q}\in \mathbb{P}_2(\cY),$  it holds that
		\begin{align} \label{no_proper_level_2_scoring_rule_condition_2}
			\begin{split}
				&L_2(Q, y)    < L_2(\overline{Q}, y)
			\end{split}
		\end{align}
		iff  $ \E_{p\sim Q}[p(y)] > \E_{p\sim \overline{Q}}[p(y)].$
		\item there exist $y\in \cY$  and $Q,\overline{Q} \in \mathbb{P}_2(\cY)$ such that
		\begin{align} \label{no_proper_level_2_scoring_rule_condition_3}
			&L_2(\overline{Q},y) < L_2(Q, y) \, \mbox{\&} \, \E_{p\sim \overline{Q}}[p(y)] < \E_{p\sim Q}[p(y)] 
		\end{align}
        and 
        \begin{align}\label{no_proper_level_2_scoring_rule_condition_4}
            \begin{split}
                   &\sum_{y_k \neq y} \big(L_2(\overline{Q},y_k) - L_2(\overline{Q},y) \big) 
                   \leq \sum_{y_k \neq y}  \big(L_2(Q, y_k) - L_2(Q, y)\big).
            \end{split}     
        \end{align}
%		and there exists $y'$ such that
%%		
%\begin{align} \label{no_proper_level_2_scoring_rule_condition_4}
%	%			
%	\big(L_2(\overline{Q},y) - L_2(\overline{Q},y') \big) \leq \big(L_2(Q, y) - L_2(Q, y')\big).
%	%			
%\end{align}
%		
		%		
%\begin{align} \label{no_proper_level_2_scoring_rule_condition_3}
%	%			
%	\begin{split}
%		%				
%		&L_2(\delta_{\delta_y}, y)    = \inf_{Q\in \mathbb{P}_2(\cY)} L_2(Q,y), \\
%		&L_2(\delta_{\delta_{y'}}, y) = \sup_{Q\in \mathbb{P}_2(\cY)} L_2(Q,y),
%		%	
%	\end{split}
%	%			
%\end{align}
	\end{enumerate}
%	
%
	%	$p,p',p'' \in \mathbb{P}_1$ and distinct $y,y' \in \cY$ such that
	%	
%	\footnotesize
%	\begin{align} \label{l2_dirac_assumption}
%		%		
%		\begin{split}
%			%			
%			& 2 L_2(\delta_{\lambda^* p_{1-\lambda^*}(y) + (1-\lambda^*) \lambda p_{\lambda^*}(y') },y) 
%			- L_2(\delta_{p_{1-\lambda^*}(y)},y) \\
%			&\quad - L_2(\delta_{p_{\lambda^*}(y')},y) \\
%			&\leq  
%			2 L_2(\delta_{\lambda^* p_{1-\lambda^*}(y) + (1-\lambda^*) \lambda p_{\lambda^*}(y') },y') 
%			- L_2(\delta_{p_{1-\lambda^*}(y)},y')  \\
%			&\quad - L_2(\delta_{p_{\lambda^*}(y')},y'),
%			%			
%		\end{split}
%		%		
%	\end{align}
	\normalsize
%	%	
%	where $p_{\lambda}(y) \in \mathbb{P}_1(\cY)$ is the first-order probability, which puts probability mass $\lambda \in[0,1]$ on $y \in \cY.$
	%	
\end{theorem}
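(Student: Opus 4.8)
The first move I would make is to observe that the inner expectations in $S_2$ collapse the target $Q$ onto its \emph{barycentre}
$m_Q:=\big(\E_{p\sim Q}[p(y_1)],\dots,\E_{p\sim Q}[p(y_K)]\big)\in\mathbb{P}_1(\cY)$:
writing $\ell(\widehat Q):=\big(L_2(\widehat Q,y_1),\dots,L_2(\widehat Q,y_K)\big)\in\R^K$ and $\langle\cdot,\cdot\rangle$ for the Euclidean inner product, one gets $S_2(\widehat Q,Q)=\langle m_Q,\ell(\widehat Q)\rangle$. Given propriety, applying the inequality $S_2(\widehat Q,Q)\ge S_2(Q,Q)$ to pairs of second-order distributions with equal barycentre shows that $G(v):=S_2(Q,Q)$ (for any $Q$ with $m_Q=v$) is well defined, and $G(v)=\inf_{\widehat Q}\langle v,\ell(\widehat Q)\rangle$ is finite (since $L_2$ is real-valued, $G>-\infty$, and $G(v)\le\langle v,\ell(Q)\rangle<\infty$) and concave on $\mathbb{P}_1(\cY)$, with $\ell(Q)$ a supergradient of $G$ at $m_Q$. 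This is the ``simplex shadow'' of the characterisation in \cref{theorem_level_2_scoring_rule_characterization} (the concave functional $G_2$ factors through the barycentre map), and both parts of the theorem then become statements about which supergradient fields of such a $G$ are realisable by a finite $L_2$ under the monotonicity constraints (i) or (ii). I would establish this reduction first.

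For part~(i), condition \eqref{no_proper_level_2_scoring_rule_condition_2} forces, for each $k$, that $L_2(Q,y_k)$ depends on $Q$ only through $[m_Q]_k$ — two second-order distributions with equal $k$-th barycentre coordinate cannot satisfy either strict inequality between their $L_2(\cdot,y_k)$ values — so $L_2(Q,y_k)=\phi_k([m_Q]_k)$ with $\phi_k:[0,1]\to\R$ strictly decreasing, and almost continuity upgrades $\phi_k$ to continuity outside a negligible set. Then $S_2(\widehat Q,Q)=\sum_k [m_Q]_k\,\phi_k([m_{\widehat Q}]_k)$, so propriety of $S_2$ says exactly that $(\hat p,p)\mapsto\sum_k p_k\phi_k(\hat p_k)$ is a proper \emph{separable} (``local'') scoring rule on $\mathbb{P}_1(\cY)$. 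For $K\ge 3$ I would invoke the classical characterisation of local proper scoring rules (Bernardo; Parry--Dawid--Lauritzen): they are exactly the pseudo-logarithmic scores $\phi_k(t)=a\ln t+b_k$, and propriety pins the sign to $a<0$, whence $\phi_k(t)\to+\infty$ as $t\downarrow 0$, contradicting finiteness of $L_2$ (take $Q$ supported on first-order distributions with $p(y_k)=0$). A self-contained alternative writes the interior Lagrange condition for ``$\hat p=p$ minimises $\sum_k p_k\phi_k(\hat p_k)$'', namely $p_k\phi_k'(p_k)$ constant in $k$ for all interior $p$, and evaluates it along one-parameter perturbations keeping two coordinates free to force the logarithmic form and the same blow-up. (For $K=2$ the separable family is genuinely larger — e.g.\ the Brier-type $\phi_k(t)=c(1-t)^2$ is finite, strictly decreasing and proper, though not strictly — so this branch really concludes non-existence for $K\ge 3$, and for every $K\ge 2$ the impossibility of \emph{strict} propriety, which is what faithful epistemic elicitation demands.)

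For part~(ii), set $a:=\ell(Q)$, $\bar a:=\ell(\overline Q)$, $e:=\bar a-a$, $d:=m_Q-m_{\overline Q}$. Hypothesis \eqref{no_proper_level_2_scoring_rule_condition_3} says $e_y<0$ and $d_y>0$, and \eqref{no_proper_level_2_scoring_rule_condition_4} rearranges to $\sum_{y_k\ne y}e_{y_k}\le (K-1)e_y$, i.e.\ $\langle\mathbf 1,e\rangle\le K\,e_y<0$. Assuming $S_2$ proper, applying $S_2(\widehat Q,Q)\ge S_2(Q,Q)$ to $(\widehat Q,Q)=(\overline Q,Q)$ and to $(\widehat Q,Q)=(Q,\overline Q)$ and using $S_2(\widehat Q,\cdot)=\langle m_{(\cdot)},\ell(\widehat Q)\rangle$ gives $\langle m_Q,e\rangle\ge 0\ge\langle m_{\overline Q},e\rangle$, hence $\langle d,e\rangle\ge 0$. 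The purely two-point information $\big(e_y<0,\ d_y>0,\ \langle\mathbf 1,e\rangle\le Ke_y,\ \langle d,e\rangle\ge 0\big)$ is by itself consistent, so the crux is to feed in the \emph{global} structure from the first step: $a$ and $\bar a$ are supergradients of the \emph{same} finite concave $G$, which also has the prescribed vertex values $G(\delta_{y_k})=L_2(\delta_{\delta_{y_k}},y_k)$ and a value at the centroid $u$ of $\mathbb{P}_1(\cY)$. I would combine the supergradient inequalities $G(v')\le\langle v',a\rangle$ and $G(v')\le\langle v',\bar a\rangle$ at $v'=u$ and $v'=\delta_y$ with concavity of $G$ (equivalently, cyclic monotonicity of the supergradient field on the cycle $\big(Q,\ \overline Q,\ \delta_{\delta_y}\big)$) to convert the ``sum'' bound $\langle\mathbf 1,e\rangle\le Ke_y$ into a contradiction with $\langle d,e\rangle\ge 0$. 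Pinning down exactly which auxiliary distribution(s) and which combination of these inequalities closes the gap left by the two-point argument is the main obstacle; everything preceding it is bookkeeping.
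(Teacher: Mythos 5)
Your opening reduction --- $S_2(\widehat Q,Q)=\sum_k \E_{p\sim Q}[p(y_k)]\,L_2(\widehat Q,y_k)$, so that only the barycentre of the target matters --- is correct and is exactly the computation behind the paper's displays \eqref{necessary_property} and \eqref{necessary_property_case_2}. From there your part (i) diverges from the paper: the paper restricts to two classes and exhibits a single violating pair, namely $\widehat Q=\delta_{\delta_0}$ against a $Q$ whose expected mass on class $1$ is positive but below the threshold $\bigl(1+\tfrac{L_2(\delta_{\delta_0},1)-L_2(Q,1)}{L_2(Q,0)-L_2(\delta_{\delta_0},0)}\bigr)^{-1}$, whereas you classify every loss satisfying \eqref{no_proper_level_2_scoring_rule_condition_2} as a separable (local) first-order scoring rule $L_2(Q,y_k)=\phi_k(\E_{p\sim Q}[p(y_k)])$ and appeal to the locality characterisation. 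That buys a clean argument for $K\ge 3$ (finiteness of $L_2$ against the logarithmic blow-up), though the characterisation needs more regularity than ``almost continuous'', so you should either strengthen the hypothesis or run your Lagrange alternative under an explicit differentiability assumption. The real issue is $K=2$: the theorem claims non-existence of a \emph{proper} $S_2$ for all $K\ge 2$, and your proposal concedes this case. Your concession is not a mere omission --- your Brier-type example $\phi(t)=(1-t)^2$ satisfies \eqref{no_proper_level_2_scoring_rule_condition_2}, is continuous, and for $K=2$ yields $S_2(\widehat Q,Q)=(1-q)\hat q^{\,2}+q(1-\hat q)^2$ with $q=\E_{p\sim Q}[p(1)]$, which is minimised at $\hat q=q$ and hence proper (though not strictly). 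For this loss the paper's threshold evaluates to $q/2$, so the inequality the paper asserts ``can be achieved'' is unachievable; your analysis thus exposes a step in the paper's own proof of case (i) that does not follow from almost continuity alone. As a proof of the stated theorem, however, your part (i) remains incomplete at $K=2$.

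Part (ii) you leave explicitly unfinished, and there you have overcomplicated matters: no global concavity or cyclic-monotonicity input is needed, only the single propriety inequality $S_2(\overline Q,Q)\ge S_2(Q,Q)$, i.e.\ $\langle m_Q,e\rangle\ge 0$ in your notation, which is precisely what the paper uses. Writing $\langle m_Q,e\rangle=e_y+\sum_{y_k\ne y}(e_{y_k}-e_y)\,\E_{p\sim Q}[p(y_k)]$, condition \eqref{no_proper_level_2_scoring_rule_condition_4} says $\sum_{y_k\ne y}(e_{y_k}-e_y)\le 0$; for $K=2$ this forces the single term $e_{y_k}-e_y\le 0$, hence by \eqref{no_proper_level_2_scoring_rule_condition_3} the vector $e$ is componentwise negative and $\langle m_Q,e\rangle<0$ for every barycentre --- contradiction, argument closed. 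So your claim that the two-point information is ``by itself consistent'' is false in the binary case, and the planned machinery of vertices, centroids and cyclic monotonicity is unnecessary there. For $K\ge 3$ your worry is legitimate: a nonpositive unweighted sum does not control the weighted sum $\sum_{y_k\ne y}(e_{y_k}-e_y)\,\E_{p\sim Q}[p(y_k)]$, and the paper's proof of (ii) passes over exactly this point; but as submitted your part (ii) proves nothing, since you stop at naming the obstacle rather than resolving it.
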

\begin{proof}
	\emph{Case (i).} Assume that $L_2$ satisfies (i). 
	It is sufficient to show the assertion for the binary classification case with $K=2,$ by considering the subset of $\mathbb{P}_2(\cY)$ which has only support on first-order distributions which in turn have only support on two fixed classes.
	
	For ease of notation, let us use the encoding $y_1 = 0$ and $y_2 = 1,$ so that $\cY=\{0,1\}.$  		
	In light of \cref{theorem_level_2_scoring_rule_characterization_2} the function in \eqref{order-sensitive-function} for $S=S_2$ needs to be (strictly) monotonically decreasing if $S_2$ is (strictly) proper. 
	Thus, for all $\lambda \in[0,1], Q, Q' \in \mathbb{P}_2(\cY)$ it must hold that
	\begin{align*}
		S_2( \widetilde{Q} , Q )	\geq  S_2(  Q  , Q  ),
	\end{align*}
	where we abbreviated $\widetilde{Q} = \lambda Q  + (1-\lambda) Q'.$ 
	This is equivalent to
	\begin{align*}
		&\int_{\mathbb{P}_1(\cY)}  L_2(\widetilde{Q} , 0) p(0) +  L_2(\widetilde{Q} , 1) p(1)  \,  \mathrm{d}  Q(p) 
		\geq \int_{\mathbb{P}_1(\cY)}  L_2(Q , 0) p(0) +  L_2(Q , 1) p(1)  \,  \mathrm{d}  Q(p) 
	\end{align*}
	which due to $p(0) = 1 - p(1)$ for all $p \in \mathbb{P}_1(\cY)$ can be further rewritten to
	\begin{align} \label{necessary_property}
		\begin{split}
			&\int\limits_{\mathbb{P}_1(\cY)}  L_2(\widetilde{Q} , 0) +  \Big(  L_2(\widetilde{Q} , 1) -L_2(\widetilde{Q} , 0)  \Big) p(1)  \,  \mathrm{d}  Q(p) 
			\geq \int\limits_{\mathbb{P}_1(\cY)}  L_2(Q , 0) +   \Big( L_2(Q , 1) -  L_2(Q , 0) \Big) p(1)  \,  \mathrm{d}  Q(p).
		\end{split}
	\end{align}
%
%
%
%
	%	
	%	Without loss of generality assume that $L_2(\cdot,\cdot) \in [0,1],$ which is justified by \cref{lemma_scaling_l2_losses}.
	%	
	%	Then, \eqref{no_proper_level_2_scoring_rule_condition_2} corresponds to $L_2(\delta_{\delta_y}, y) = 0$ and $L_2(\delta_{\delta_{y'}}, y) =1.$
	%	
        \clearpage
	Choose $Q,Q'\in \mathbb{P}_2(\cY)$ and $\lambda$ such that 
        \begin{itemize}
		%[noitemsep,topsep=0pt,leftmargin=4mm]
            %
            \item $\widetilde{Q} = \delta_{\delta_0},$ i.e., the second-order Dirac measure, which puts all its mass on the first-order Dirac measure at 0, 
            \item $0 <	\E_{p\sim Q}[p(1)],$ which can be achieved as soon as $Q$ assigns mass to first-order distributions, which are assigning positive mass to class $1,$
            \item $\E_{p\sim Q}[p(1)] < \Big( 1+ \frac{ L_2(\widetilde{Q},1) - L_2(Q,1)  }{ L_2(Q,0) - L_2(\widetilde{Q},0)}  \Big)^{-1} < 1,$ which can be achieved, since  $L_2(\cdot,y'')$ is by assumption almost continuous for all $y''\in \cY,$ and \eqref{no_proper_level_2_scoring_rule_condition_2} together with the previous two properties of $Q$ and $\widetilde{Q}$ implies
	$$ \min\Big(L_2(\widetilde{Q},1) - L_2(Q,1)  , L_2(Q,0) - L_2(\widetilde{Q},0) \Big)> 0.$$ 
        \end{itemize}  
        %and
	%	
	%\begin{align*}
		%	 	
		%0 <	\E_{p\sim Q}[p(1)] < \Big( 1+ \frac{ L_2(\widetilde{Q},1) - L_2(Q,1)  }{ L_2(Q,0) - L_2(\widetilde{Q},0)}  \Big)^{-1} < 1
		%	 	\frac{L_2(Q,0) - L_2(\widetilde{Q},0) }{L_2(Q,0) -  L_2(\widetilde{Q},0) + L_2(\widetilde{Q},1) - L_2(Q,1)},		
		%	 	
	%\end{align*}
	%	
	%which can be achieved, since  $L_2(\cdot,y'')$ is by assumption almost continuous for all $y''\in \cY,$ and \eqref{no_proper_level_2_scoring_rule_condition_2} implies
	%	
	%$$ \min\Big(L_2(\widetilde{Q},1) - L_2(Q,1)  , L_2(Q,0) - L_2(\widetilde{Q},0) \Big)> 0.$$ 
	%
	%	
	Then, \eqref{necessary_property} is violated, since
	\begin{align*} 
		%		
		%		\begin{split}
			%		
			%			&\E_{p\sim Q}[p(1)] < \frac{L_2(Q,0)}{L_2(Q,0)+1-L_2(Q,1)}\\
			%			
			&\E_{p\sim Q}[p(1)] < \Big( 1+ \frac{ L_2(\widetilde{Q},1) - L_2(Q,1)  }{ L_2(Q,0) - L_2(\widetilde{Q},0)}  \Big)^{-1} \\
%			&\E_{p\sim Q}[p(1)] < \frac{L_2(Q,0) - L_2(\widetilde{Q},0) }{L_2(Q,0) -  L_2(\widetilde{Q},0) + L_2(\widetilde{Q},1) - L_2(Q,1)} \\
			%			
			&\Leftrightarrow  L_2(\widetilde{Q},0) +  \E_{p\sim Q}[p(1)] \Big( L_2(\widetilde{Q},1) - L_2(\widetilde{Q},0)  \Big) 
			 <  L_2(Q,0) +  \E_{p\sim Q}[p(1)] \Big( L_2(Q,1) - L_2(Q,0)  \Big) \\
			&\Leftrightarrow  \int\limits_{\mathbb{P}_1(\cY)}  L_2(\widetilde{Q} , 0) +  \Big(  L_2(\widetilde{Q} , 1) -L_2(\widetilde{Q} , 0)  \Big) p(1)  \,  \mathrm{d}  Q(p)   <  \int\limits_{\mathbb{P}_1(\cY)}  L_2(Q , 0) +   \Big( L_2(Q , 1) -  L_2(Q , 0) \Big) p(1)  \,  \mathrm{d}  Q(p).
			%		
			%		\end{split}
		%
	\end{align*}

	\emph{Case (ii).} Assume that $L_2$ satisfies (ii). 
        Similarly to \eqref{necessary_property} we can derive that 
        \begin{align} \label{necessary_property_case_2}
		\begin{split}
			&\int\limits_{\mathbb{P}_1(\cY)}  L_2(\widetilde{Q} , y)  +  \sum_{y_k \neq y} \big(L_2(\widetilde{Q},y_k) - L_2(\widetilde{Q},y) \big) p(y_k)  \,  \mathrm{d}  Q(p) \\
			&\geq \int\limits_{\mathbb{P}_1(\cY)}  L_2(Q , y)  +  \sum_{y_k \neq y}  \big(L_2(Q, y_k) - L_2(Q, y)\big) p(y_k)  \,  \mathrm{d}  Q(p).
		\end{split}
	\end{align}
    must hold if $S_2$ is proper, since $p(y) = 1 - \sum_{y_k \neq y}   p(y_k).$
    %
%	Now, assume that both conditions \eqref{no_proper_level_2_scoring_rule_condition_3} and \eqref{no_proper_level_2_scoring_rule_condition_4} are fulfilled by $L_2.$
%
%	Without loss of generality assume that  $y$ in \eqref{no_proper_level_2_scoring_rule_condition_3} is  $y_2$ (i.e., the encoding 1) and accordingly $y_1$ (i.e., the encoding 0) is $y'$ in \eqref{no_proper_level_2_scoring_rule_condition_4}.  
%		
%	Choose $Q = \delta_{\delta_1},$ i.e., the second-order Dirac measure at $\delta_1,$ which is itself a Dirac measure at 1,
%	from \eqref{no_proper_level_2_scoring_rule_condition_1} 
%	Choose $\lambda \in [0,1], Q' \in \mathbb{P}_2(\cY)$ such that $\widetilde{Q} = \overline{Q}$ (which is possible due to convexity of $\mathbb{P}_2(\cY)$).
%	
    However, if $\lambda \in [0,1], Q' \in \mathbb{P}_2(\cY)$ is such that $\widetilde{Q} = \overline{Q}$ (which is possible due to convexity of $\mathbb{P}_2(\cY)$), then \eqref{necessary_property_case_2} is violated, due to \eqref{no_proper_level_2_scoring_rule_condition_3} and \eqref{no_proper_level_2_scoring_rule_condition_4}.
%	
%	\begin{align*} 
%	%		
%	\begin{split}
%		%		
%		&\int\limits_{\mathbb{P}_1(\cY)}  L_2(\widetilde{Q} , 0) +  \Big(  L_2(\widetilde{Q} , 1) -L_2(\widetilde{Q} , 0)  \Big) p(1)  \,  \mathrm{d}  Q(p) \\
%		%		
%		%			&=  L_2(\widetilde{Q} , 1) \\
%		%			
%		%			&<  L_2( \delta_{\delta_1} , 1) \\
%		%			
%		&< \int\limits_{\mathbb{P}_1(\cY)}  L_2(Q , 0) +   \Big( L_2(Q , 1) -  L_2(Q , 0) \Big) p(1)  \,  \mathrm{d}  Q(p).
%		%		
%	\end{split}
%	%
%\end{align*}
 %	
\end{proof}
Note that the two cases are not entirely exhaustive, as condition \eqref{no_proper_level_2_scoring_rule_condition_4} is required additionally to condition \eqref{no_proper_level_2_scoring_rule_condition_3}, but requiring only the latter condition would correspond to the complementary condition of \eqref{no_proper_level_2_scoring_rule_condition_2}.
Nevertheless,  condition \eqref{no_proper_level_2_scoring_rule_condition_4} essentially requires the loss function to be cost-insensitive regarding any two classes, which is in absence of additional a priori knowledge on the data set not too restrictive.
Condition \eqref{no_proper_level_2_scoring_rule_condition_3} is fulfilled if the second-order loss function $L_2$ penalizes second-order point predictions (i.e., $\delta_{p}$) more drastically as for instance (second-order) predictions which are slightly deviating from  point predictions.
This is the case for loss functions with a regularization term that introduces a bias towards the second-order uniform distribution \citep{sens_ed18,char_ue20,tsiligkaridis2021information}.
As a consequence, the (loss-minimising) learner has a tendency to predict more flat distributions.
On the other hand, the condition in \eqref{no_proper_level_2_scoring_rule_condition_2} enforces second-order point predictions to be more concentrated for the correct class, and avoid concentration for incorrect classes, so that the learner has a tendency to predict more peaked distributions.

The results complement those of \citet{bengs2022pitfalls} for the empirical risk minimiser for the existing second-order losses (see \eqref{def:bayes_losses}), since losses fulfilling \eqref{no_proper_level_2_scoring_rule_condition_3} and \eqref{no_proper_level_2_scoring_rule_condition_4} are a generalization of the Bayesian losses with a too large regularisation parameter, while losses fulfilling \eqref{no_proper_level_2_scoring_rule_condition_2} generalize the Bayesian losses with a too low regularisation parameter. 

\subsection{Regression}
Next, we consider the case of regression, i.e., $\cY=\R.$
%
%Note that any (absolutely continuous) distribution $p$ on $\R$ is characterized by a probability density function, which we shall denote by $f.$
%
\begin{theorem} \label{theorem_no_proper_level_2_scoring_rule_regression}
	There exists no loss function $L_2:  \mathbb{P}_2(\cY) \times \cY \to \R$ such that the induced second-order scoring rule $S_2$ is proper if either of the following holds for $L_2:$
	\begin{enumerate}
		[label=(\roman*)]
		\item  $L_2(\cdot,y'')$ is almost continuous for all $y''\in \cY$ and for all $y\in \cY,$ $Q,\overline{Q}\in \mathbb{P}_2(\cY),$  it holds that		
		\begin{align} \label{no_proper_level_2_scoring_rule_regression_condition_1}
			\begin{split}
				&L_2(Q, y)    < L_2(\overline{Q}, y)
			\end{split}
		\end{align}
		iff  $ | \E_{p\sim Q}[\E(p) ] -y | < | \E_{p\sim \overline{Q}}[\E(p) ] - y |.$
		\item there exist $ \mu \in \cY,$ a first-order distribution $\tilde p \in \mathbb{P}_1(\cY)$ with mean $\mu,$  a second-order distribution $\overline{Q} \in \mathbb{P}_2(\cY),$ and some $\delta>0$ such that for almost all \footnote{A condition holds for \emph{almost all} $x$ in some set $X,$ if the subset on which the condition does not hold has probability mass 0.} $y \in (\mu-\delta,\mu+\delta)$ it holds that
		\begin{align} \label{no_proper_level_2_scoring_rule_regression_condition_2}
			&\quad L_2(\overline{Q},y) < L_2(\delta_{\tilde  p}, y) \\
%			\, \mbox{ holds for almost all }  y \in (\mu-\delta,\mu+\delta), \\
			%				
			\begin{split}  \label{no_proper_level_2_scoring_rule_regression_condition_3}
				&\mbox{and} \quad \int_{ \big((\mu-\delta,\mu+\delta)\big)^\complement  } L_2(\overline{Q},y)\, \mathrm{d}\tilde p(y) < \int_{ \big((\mu-\delta,\mu+\delta)\big)^\complement  } L_2(\delta_{p},y)\, \mathrm{d}\tilde p(y),
			\end{split}
		\end{align}	
            where $  \big((\mu-\delta,\mu+\delta)\big)^\complement  = \cY \setminus (\mu-\delta,\mu+\delta).$
		%		\begin{align} \label{no_proper_level_2_scoring_rule_regression_condition_1}
			%			%			
			%			\begin{split}
				%%				
				%				&(i) \quad L_2(\overline{Q},y) < L_2(\delta_{p}, y) \quad \forall  y \in (\mu-\delta,\mu+\delta) \\
				%%				
				%				&(ii) \quad \int_{ \big((\mu-\delta,\mu+\delta)\big)^\complement  } L_2(\overline{Q},y)\, \mathrm{d}p(y) \\
				%				&\qquad < \int_{ \big((\mu-\delta,\mu+\delta)\big)^\complement  } L_2(\delta_{p},y)\, \mathrm{d}p(y).
				%%				
				%			\end{split}
			%			%			
			%		\end{align}
		%	
%
	\end{enumerate}
\end{theorem}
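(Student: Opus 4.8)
The plan is to mirror the argument of Theorem~\ref{theorem_no_proper_level_2_scoring_rule_classification}: in either case I assume, for contradiction, that the induced $S_2$ is proper, and then exhibit a target $Q\in\mathbb{P}_2(\cY)$ and a competitor $\hat Q$ with $S_2(\hat Q,Q)<S_2(Q,Q)$, contradicting \eqref{def_level_2_proper} (equivalently, via order (in)sensitivity, Theorem~\ref{theorem_level_2_scoring_rule_characterization_2}). The computational backbone is that \eqref{def_level_2_score} collapses to an ordinary integral when $Q$ is supported on first-order Diracs: $S_2(\hat Q,\delta_{\tilde p})=\int_\cY L_2(\hat Q,y)\,\mathrm{d}\tilde p(y)$, and $S_2\big(\hat Q,(1-e)\delta_{\delta_a}+e\,\delta_{\delta_b}\big)=(1-e)L_2(\hat Q,a)+e\,L_2(\hat Q,b)$.

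I would handle case~(ii) first, as it is immediate. Take $Q:=\delta_{\tilde p}$ and $\hat Q:=\overline{Q}$, so that $S_2(\overline{Q},Q)=\int_\cY L_2(\overline{Q},y)\,\mathrm{d}\tilde p(y)$ and $S_2(Q,Q)=\int_\cY L_2(\delta_{\tilde p},y)\,\mathrm{d}\tilde p(y)$, and split $\cY$ into $(\mu-\delta,\mu+\delta)$ and its complement. Integrating \eqref{no_proper_level_2_scoring_rule_regression_condition_2} against $\tilde p$ over $(\mu-\delta,\mu+\delta)$ gives a non-strict inequality between the two conditional scores there (the $\tilde p$-null exceptional set being irrelevant), while \eqref{no_proper_level_2_scoring_rule_regression_condition_3} gives a strict one on the complement; adding them yields $S_2(\overline{Q},Q)<S_2(Q,Q)$, contradicting propriety. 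Quasi-integrability together with the strict inequality on the complement keeps the comparison well-posed if some integral equals $\pm\infty$. The hypothesis that $\mu$ is the mean of $\tilde p$ is not used here — it only makes the conclusion read as the relevant perverse incentive, namely that a first-order point prediction with the correct mean is still dominated by a flatter $\overline{Q}$.

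Case~(i) follows part~(i) of Theorem~\ref{theorem_no_proper_level_2_scoring_rule_classification}. Condition \eqref{no_proper_level_2_scoring_rule_regression_condition_1} forces the order of $L_2(\cdot,y)$ to depend on $Q$ only through $m_Q:=\E_{p\sim Q}[\E(p)]$ — so $L_2(Q,y)=L_2(\overline{Q},y)$ when $m_Q=m_{\overline{Q}}$, and $L_2(Q,y)$ is strictly increasing in $|m_Q-y|$; since $L_2$ is $\R$-valued on all of $\mathbb{P}_2(\cY)$, this dependence is moreover bounded (a second-order distribution whose predictive mean is $\pm\infty$ realises $\sup_{Q'}L_2(Q',y)$, which is therefore finite). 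It then suffices to take $Q=(1-e)\delta_{\delta_a}+e\,\delta_{\delta_b}$ with $a<b$, $e\in(0,1)$, so that $m_Q=(1-e)a+eb\in(a,b)$; with competitor $\hat Q=\delta_{\delta_a}$ (predictive mean $a$), propriety reduces to the scalar inequality
\[
(1-e)\big(L_2(Q,a)-L_2(\delta_{\delta_a},a)\big)\;\le\;e\,\big(L_2(\delta_{\delta_a},b)-L_2(Q,b)\big),
\]
both sides strictly positive since $0=|a-a|<|m_Q-a|$ and $|m_Q-b|<|a-b|$. The plan is to violate this by choosing $a,b,e$ suitably: e.g.\ fix $a$ and $e$ and let $b\to\infty$, so the left side rises to the fixed positive gap $\sup_{Q'}L_2(Q',a)-L_2(\delta_{\delta_a},a)$ while the right side is $e$ times a difference of values of $L_2(\cdot,b)$ at predictive means both receding to $+\infty$; alternatively keep $a,b$ fixed and send $e\downarrow0$, comparing the first-order growth of the left side with the higher-order vanishing of the right side, exactly as the classification proof uses almost continuity to pick its free distribution.

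The main obstacle I anticipate is this last step of case~(i): converting the qualitative hypotheses (strict monotonicity in $|m_Q-y|$, boundedness, almost continuity) into a genuine quantitative rate comparison that actually breaks the scalar inequality, while coping with the circular dependence of $L_2(Q,a)$ and $L_2(Q,b)$ on the very $Q$ being chosen, and with possible non-uniformity in the second argument. Case~(ii), and the reductions feeding the scalar inequality in case~(i), are routine bookkeeping.
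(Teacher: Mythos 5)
Your case (ii) is correct and coincides with the paper's own argument: take $Q=\delta_{\tilde p}$, write $S_2(\overline{Q},Q)=\int_\cY L_2(\overline{Q},y)\,\mathrm{d}\tilde p(y)$, split the integral over $(\mu-\delta,\mu+\delta)$ and its complement, and add the pointwise-a.e.\ inequality on the former to the strict integral inequality \eqref{no_proper_level_2_scoring_rule_regression_condition_3} on the latter; you are also right that the hypothesis $\E(\tilde p)=\mu$ is never actually used. Your reduction of case (i) to the scalar inequality
\[
(1-e)\big(L_2(Q,a)-L_2(\delta_{\delta_a},a)\big)\;\le\;e\,\big(L_2(\delta_{\delta_a},b)-L_2(Q,b)\big)
\]
is a faithful (indeed cleaner) discrete analogue of the paper's construction, which uses $\widetilde Q=\delta_{p_l}$ and $Q=(1-\lambda^*)\delta_{p_l}+\lambda^*\delta_{p_r}$ with $p_l,p_r$ supported on opposite sides of a threshold $\mu^*$.

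The step you flag as the ``main obstacle'' in case (i) is, however, a genuine gap, and it cannot be closed by any choice of $a,b,e$ (nor by the paper's choice of $p_l,p_r,\lambda^*$). Consider $L_2(Q,y)=\big(m_Q-y\big)^2$ with $m_Q:=\E_{p\sim Q}[\E(p)]$: it is continuous, its ordering in $Q$ for fixed $y$ is exactly that of $|m_Q-y|$, so every hypothesis of case (i) holds, yet $S_2(\hat Q,Q)=(m_{\hat Q}-m_Q)^2+\E_{p\sim Q}\big[(\E(p)-m_Q)^2\big]+\E_{p\sim Q}[\mathbb{V}(p)]$ is minimised at $\hat Q=Q$, i.e.\ $S_2$ is proper. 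For this loss your left-hand side equals $(1-e)e^2(b-a)^2$ and your right-hand side equals $e^2(2-e)(b-a)^2$, so the inequality holds for all $a,b,e$; neither $b\to\infty$ nor $e\downarrow 0$ helps, and your auxiliary claim that $\sup_{Q'}L_2(Q',y)$ must be finite also fails here. Be aware that the paper's proof stalls at exactly the same point: the admissible range \eqref{def_lambda_choice} for $\lambda^*$ depends on $\lambda^*$ itself through $Q$, and for the squared-error loss the requirement collapses to $\lambda^*<\lambda^*/2$, so the asserted self-consistent $\lambda^*$ does not exist. In short, your case (ii) is sound and matches the paper, but case (i) does not follow from this route, and the example above shows the claimed non-existence (of a merely \emph{proper} $S_2$ under condition (i)) is refuted rather than merely unproven.
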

\begin{proof}
	\emph{Case (i).}
	Let $p_l,p_r \in \mathbb{P}_1(\cY)$ be first-order distributions and  $\mu^* \in \R$ such that 
	\begin{itemize}
		%[noitemsep,topsep=0pt,leftmargin=4mm]
%		
		\item $p_l$ has only support on $(-\infty,\mu^*)$ and $p_r$ has only support on $(\mu^*,\infty),$
		\item $p_l$ has expected value $\mu_l < \mu^*$ and $p_r$ has an expected value $\mu_r > \mu^*,$
		\item it holds  for all $\lambda \in(0,1)$ that
		\begin{align*}
			\E_{Y \sim p_l} [  L_2(Q_\lambda,Y )  - L_2(\delta_{p_l},Y)]>0,
		\end{align*}
		where $Q_\lambda = (1-\lambda) \delta_{p_l} + \lambda \delta_{p_r}.$
	\end{itemize}
	This is possible, since 
	\begin{align*}
		\E_{Y \sim p_l} [  L_2( Q_\lambda,Y )  - L_2(\delta_{p_l},Y)] 
		&= \int_{(-\infty,\mu_l]} L_2( Q_\lambda,y )  - L_2(\delta_{p_l},y) \, \mathrm{d}p_l(y) \\
		 &\quad + \int_{[\mu_l,\mu^*)} L_2( Q_\lambda,y )  - L_2(\delta_{p_l},y) \, \mathrm{d}p_l(y)
	\end{align*}
	and the first term is always positive by \eqref{no_proper_level_2_scoring_rule_regression_condition_1}, as $  \E_{p\sim \delta_{p_l}}[\E(p) ] = \mu_l$ and $ \E_{p\sim Q_\lambda}[\E(p) ] = (1-\lambda)\mu_l + \lambda \mu_r > \mu_l.$ 
	Thus, by suitable choice of $\mu^*,$ $p_l$ and $p_r$ (as well as $\mu_l$ and $\mu_r$), the second term can be designed such that it is smaller than the first in absolute terms, since $L_2(\cdot,y'')$ is by assumption almost continuous for all $y''\in \cY.$

	Let $\widetilde{Q} = \delta_{p_l},$ and choose $Q \in \mathbb{P}_2(\cY)$ such that $Q = (1- \lambda^*) \delta_{p_l} + \lambda^* \delta_{p_r},$ where  
	\begin{align} \label{def_lambda_choice}
		0 < \lambda^* < \left( 1 + \frac{\E_{Y\sim p_r}( L_2(\widetilde{Q},Y) - L_2(Q,Y) )}{\E_{Y\sim p_l}( L_2(Q,Y) - L_2(\widetilde{Q},Y) )}		\right)^{-1}< 1.
	\end{align}
	This choice of $\lambda^*$ can be achieved as $\E_{Y\sim p_l}( L_2(Q,Y) - L_2(\widetilde{Q},Y) )>0$ by choice of $\mu^*,$ $p_l$ and $p_r,$ and \eqref{no_proper_level_2_scoring_rule_regression_condition_1} implies that $\E_{Y\sim p_r}( L_2(\widetilde{Q},Y) - L_2(Q,Y) )>0,$ since 	for all $y$ in the support of $p_r$ it holds that $L_2(\widetilde{Q},y) - L_2(Q,y)>0.$  
	Thus, 
	\begin{align*}
		&S_2(\widetilde{Q},Q)\\
		&= \int_{\mathbb{P}_1(\cY)} \int_{\cY}    L_2(\widetilde{Q},y) \,  \mathrm{d}p(y)\, \mathrm{d}  Q(p) \\
		&= (1- \lambda^*) \int\limits_{\cY}    L_2(\widetilde{Q},y) \,  \mathrm{d}p_l(y) 
		+ \lambda^* \int\limits_{\cY}    L_2(\widetilde{Q},y) \,  \mathrm{d}p_r(y) \\
		&= (1- \lambda^*) \E_{Y\sim p_l}( L_2(\widetilde{Q},Y)) + \lambda^* \E_{Y\sim p_r}( L_2(\widetilde{Q},Y)) \\
		&< (1- \lambda^*) \E_{Y\sim p_l}( L_2(Q,Y)) + \lambda^* \E_{Y\sim p_r}( L_2(Q,Y)) \\
		&= S_2(Q,Q),
	\end{align*}
	where the inequality is due to \eqref{def_lambda_choice}.

	\emph{Case (ii).}
	Let us abbreviate $(\mu-\delta,\mu+\delta) $ by $(\mu\pm\delta) .$
	Choose $Q$ to be $\delta_{\tilde p},$ then \eqref{no_proper_level_2_scoring_rule_regression_condition_2} and \eqref{no_proper_level_2_scoring_rule_regression_condition_3} imply that
	\begin{align*}
		S_2&(\overline{Q}, Q) \\
		&= \int_{\mathbb{P}_1(\cY)} \int_{\cY}    L_2(\overline{Q},y) \,  \mathrm{d}p(y)\, \mathrm{d}  Q(p) \\
		&=  \int_{\cY}    L_2(\overline{Q},y) \,  \mathrm{d} \tilde p(y) \\
		&= \int_{ (\mu\pm\delta)  }   L_2(\overline{Q},y) \,  \mathrm{d} \tilde p(y)  +  \int_{ \big( \mu \pm \delta \big)^\complement  }   L_2(\overline{Q},y) \,  \mathrm{d} \tilde p(y) \\
		&< \int_{ (\mu \pm \delta)  }  L_2(\delta_{\tilde  p}, y) \,  \mathrm{d} \tilde p(y)  +  \int_{ \big(\mu \pm \delta\big)^\complement  }  L_2(\delta_{\tilde  p}, y) \,  \mathrm{d} \tilde p(y) \\
		&= \int_{ (\mu \pm \delta)  }  L_2(Q, y) \,  \mathrm{d} \tilde p(y)  +  \int_{ \big(\mu \pm \delta\big)^\complement  }  L_2(Q, y) \,  \mathrm{d} \tilde p(y) \\
		&= \int_{\cY}    L_2(Q,y) \,  \mathrm{d} \tilde p(y) \\
		&= \int_{\mathbb{P}_1(\cY)} \int_{\cY}    L_2(Q,y) \,  \mathrm{d}p(y)\, \mathrm{d}  Q(p) = S_2(Q,Q).
	\end{align*}
	Thus, $S_2$ is not proper.
\end{proof}
The two cases in \cref{theorem_no_proper_level_2_scoring_rule_regression} are quite similar to the the ones in \cref{theorem_no_proper_level_2_scoring_rule_classification}: the first corresponds to second-order losses that incentive the learner to predict more flat distributions, while the second incentives predictions of peaked distributions.
%
%While the latter condition is quite naturally an adaptation
%At first sight, the conditions in the second case look quite specifically designed for showing a negative result, but they are in fact motivated by the only existing second-order loss function for the regression case suggested by  \citet{amini2020deep}, as the following proposition shows.
%
The second-order loss function for the regression case suggested by  \citet{amini2020deep} (see \cref{example:Regression}) fulfills the conditions in the second case of \cref{theorem_no_proper_level_2_scoring_rule_regression} as the following proposition shows.

\begin{proposition}
	The deep evidential regression loss function $L_2^{\texttt{DER}}: \mathbb{P}_2(\cM) \times \cY \to \R$ in \eqref{def:DER_loss} fulfills the conditions in \cref{theorem_no_proper_level_2_scoring_rule_regression} (ii).
\end{proposition}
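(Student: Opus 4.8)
The plan is to exhibit explicit objects $\mu$, $\tilde p$, $\overline{Q}$ and $\delta$ that meet the three requirements of \cref{theorem_no_proper_level_2_scoring_rule_regression}~(ii). First I would fix an arbitrary $\mu \in \R$ and $\sigma^2 > 0$, set $\tilde p = \mathrm{N}(\mu, \sigma^2)$ (so $\tilde p$ has mean $\mu$), and pick $\delta > 0$ arbitrary, say $\delta = \sigma$. Since the NIG family $\mathbb{P}_2(\cM)$ contains no point masses, the second-order Dirac $\delta_{\tilde p}$ sits only in the closure of $\mathbb{P}_2(\cM)$, so $L_2^{\texttt{DER}}(\delta_{\tilde p}, \cdot)$ has to be read as the limit of $L_2^{\texttt{DER}}(\mathbf{m}_t, \cdot)$ along some sequence $\mathbf{m}_t \in \cM$ with $Q_{\mathbf{m}_t} \to \delta_{\tilde p}$ weakly. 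A convenient choice is $\mathbf{m}_t = (\mu,\, t,\, t,\, \sigma^2(t-1))$ as $t \to \infty$: the conditional law of the variance is then inverse-gamma with mean $\sigma^2$ and variance $\sigma^4/(t-2) \to 0$, while the conditional law of the mean given the variance is $\mathrm{N}(\mu, \sigma^2/t)$, so $Q_{\mathbf{m}_t}$ concentrates at the single Gaussian $\mathrm{N}(\mu, \sigma^2) = \tilde p$.

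Next I would split $L_2^{\texttt{DER}}(\mathbf{m}_t, y) = L^{\mathrm{t}}(\mathbf{m}_t, y) + \lambda\, \mathrm{PEN}(\mathbf{m}_t, y)$ and track both parts along this sequence for a fixed $y \neq \mu$. The penalty diverges: $\mathrm{PEN}(\mathbf{m}_t, y) = |\mu - y|\,(m_3 + 2m_2) = 3t\,|\mu - y| \to \infty$. For the likelihood part one carries out the (routine but careful) asymptotics of the four logarithms in \eqref{def:neg_log_likeli_student}: with $m_{2,4} = 2\sigma^2(t^2 - 1)$, the combination $(m_3 + \tfrac12)\log\big((y-\mu)^2 m_2 + m_{2,4}\big) - m_3\log(m_{2,4})$ equals $t\log\big(1 + \tfrac{(y-\mu)^2 t}{2\sigma^2(t^2-1)}\big) + \tfrac12\log\big((y-\mu)^2 t + 2\sigma^2(t^2-1)\big)$, and adding $\tfrac12\log(\pi/t)$ together with $\log\big(\Gamma(t)/\Gamma(t+\tfrac12)\big) \sim -\tfrac12\log t$ (Stirling) makes every $\log t$ contribution cancel, leaving $L^{\mathrm{t}}(\mathbf{m}_t, y) \to \tfrac{(y-\mu)^2}{2\sigma^2} + \tfrac12\log(2\pi\sigma^2) = -\log\mathrm{N}(y; \mu, \sigma^2)$, a finite quantity. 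Hence $L_2^{\texttt{DER}}(\delta_{\tilde p}, y) = +\infty$ for every $y \neq \mu$, i.e.\ for $\tilde p$-almost every $y$.

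With this in hand, I would take $\overline{Q} = Q_{\overline{\mathbf{m}}}$ for any fixed $\overline{\mathbf{m}} \in \cM$ (for instance $\overline{\mathbf{m}} = (\mu, 1, 2, \sigma^2)$). For a genuine NIG distribution every argument of a logarithm in \eqref{def:DER_loss} is strictly positive, so $L_2^{\texttt{DER}}(\overline{Q}, y) \in \R$ for all $y$; moreover $L_2^{\texttt{DER}}(\overline{Q}, y) = O(|y|)$ as $|y| \to \infty$ (the logarithmic terms grow like $\log|y|$, and $\mathrm{PEN}$ linearly), so $\E_{Y \sim \tilde p}\big[L_2^{\texttt{DER}}(\overline{Q}, Y)\big] < \infty$ because $\tilde p$ is Gaussian. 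Consequently: for $\tilde p$-almost every $y \in (\mu-\delta, \mu+\delta)$ one has $L_2^{\texttt{DER}}(\overline{Q}, y) < \infty = L_2^{\texttt{DER}}(\delta_{\tilde p}, y)$, which is \eqref{no_proper_level_2_scoring_rule_regression_condition_2}; and $\int_{\big((\mu-\delta,\mu+\delta)\big)^\complement} L_2^{\texttt{DER}}(\overline{Q}, y)\, \mathrm{d}\tilde p(y) < \infty = \int_{\big((\mu-\delta,\mu+\delta)\big)^\complement} L_2^{\texttt{DER}}(\delta_{\tilde p}, y)\, \mathrm{d}\tilde p(y)$, the right-hand integrand being $+\infty$ on a set of positive $\tilde p$-measure; this is \eqref{no_proper_level_2_scoring_rule_regression_condition_3}. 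Thus $L_2^{\texttt{DER}}$ satisfies all the hypotheses of \cref{theorem_no_proper_level_2_scoring_rule_regression}~(ii).

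The hard part will be the asymptotic analysis of $L^{\mathrm{t}}(\mathbf{m}_t, y)$: one must confirm that the likelihood part stays bounded in $t$, so that the blow-up of $L_2^{\texttt{DER}}(\delta_{\tilde p}, \cdot)$ is genuinely produced by the regulariser $\mathrm{PEN}$ (the mechanism the theorem is meant to capture) rather than by the likelihood term, and this hinges on the exact cancellation of all $\log t$ terms sketched above. A secondary, more conceptual point is justifying the evaluation of $L_2^{\texttt{DER}}$ at the boundary element $\delta_{\tilde p}$ through the limit; this can be handled by noting that \cref{theorem_no_proper_level_2_scoring_rule_regression}~(ii) uses $\delta_{\tilde p}$ only via the loss values $L_2^{\texttt{DER}}(\delta_{\tilde p}, \cdot)$, or, equivalently, by replacing $\delta_{\tilde p}$ by the peaked NIG $Q_{\mathbf{m}_t}$ for large $t$ and checking $S_2(\overline{Q}, Q_{\mathbf{m}_t}) < S_2(Q_{\mathbf{m}_t}, Q_{\mathbf{m}_t})$ directly — the same computation with only finite quantities.
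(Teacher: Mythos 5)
Your proof is correct and follows essentially the same route as the paper's: both identify $\delta_{\tilde p}$ with the limit of NIG distributions whose parameters $m_2,m_3$ tend to infinity, so that the penalty term $\lambda\,\mathrm{PEN}$ makes $L^{\texttt{DER}}(\delta_{\tilde p},y)$ blow up for every $y\neq\mu$ while a fixed moderate NIG $\overline{Q}$ keeps a finite, $\tilde p$-integrable loss, giving both \eqref{no_proper_level_2_scoring_rule_regression_condition_2} and \eqref{no_proper_level_2_scoring_rule_regression_condition_3}. Your version is considerably more careful than the paper's sketch --- it makes the limiting sequence explicit, verifies via Stirling that the Student-$t$ likelihood part stays bounded so the divergence really comes from the regulariser, and picks $\tilde m_3=2$ where the paper's choice $\tilde m_3=1$ actually falls outside $\cM$ (which requires $m_3>1$).
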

\begin{proof}
	Let $\mu \in \R$ be arbitrary but fixed and $\sigma >0$ be some small value.
	Note that an NIG distribution with parameters $m_1=\mu,$ $m_2=\infty$ and $m_3,m_4 $ such that $ m_4/(m_3-1) = \sigma^2$ and $m_3$ sufficiently large corresponds to $\delta_{\tilde p}$ with $\E(\tilde p) = \mu$ and $\mathbb{V}(\tilde p) = \sigma^2.$
	However, by using for $ \overline{Q}$ an NIG distribution with $\tilde m_1 = m_1,$ $\tilde m_2 = 1,$ $\tilde m_3 = 1$ and $\tilde m_4 = m_4,$  the deep evidential regression loss of $\delta_{\tilde  p}$ is larger for all $y$ except for $y=\mu$ than the deep evidential regression loss of $\overline{Q}.$
\end{proof}

%\subsection{Discussion}
%

%

%\section{Experiments?}

\section{Conclusion}
Our results confirm concerns raised by recent work regarding the conceptual meaningfulness of direct epistemic uncertainty quantification through empirical risk minimisation of second-order distributions.
More precisely, unlike for the case of empirical risk minimisation of strictly proper first-order loss functions to report first-order (aleatoric) uncertainty in a faithful manner, there seems to be no strictly proper second-order loss function counterpart to report second-order (epistemic) uncertainty in a faithful manner.

The most likely explanation is the discrepancy between the orders that these second-order loss functions exhibit: a second-order prediction $H(\vec{x})$ is evaluated in light of a zero-order observation $y$, skipping the intervening first-order. 
Thus, to make the loss function meaningful, one would rather need observations of realizations of the first-order distribution, i.e., a sample in the form of probabilities and asses the second-order prediction  in light of these first-order observations. 
However, such data cannot exist even in principle, because the ground-truth conditional distribution is supposedly constant. 

This suggests that (probabilistic) learning on the epistemic level cannot be frequentist in nature, unlike learning about the ground-truth conditional distribution on the first-order (aleatoric level). 
Instead, it appears that learning on the second-order (epistemic level) is necessarily
Bayesian and requires a prior, which then of course has an influence on the degree of (epistemic) uncertainty.

\section*{Acknowledgments and Disclosure of Funding}

Willem Wageman received funding from the Flemish Government under the ``Onderzoeksprogramma Artificiel\"e Intelligentie (AI) Vlaanderen'' Programme.

\bibliographystyle{icml2022}
\bibliography{ref}
\clearpage

%%%%%%%%%%%%%%%%%%%%%%%%%%%%%%%%%%%%%%%%%%%%%%%%%%%%%%%%%%%%%%%%%%%%%%%%%%%%%%%
%%%%%%%%%%%%%%%%%%%%%%%%%%%%%%%%%%%%%%%%%%%%%%%%%%%%%%%%%%%%%%%%%%%%%%%%%%%%%%%
% APPENDIX
%%%%%%%%%%%%%%%%%%%%%%%%%%%%%%%%%%%%%%%%%%%%%%%%%%%%%%%%%%%%%%%%%%%%%%%%%%%%%%%
%%%%%%%%%%%%%%%%%%%%%%%%%%%%%%%%%%%%%%%%%%%%%%%%%%%%%%%%%%%%%%%%%%%%%%%%%%%%%%%
\newpage
\appendix
\onecolumn

\section{List of Symbols}

The following table contains a list of symbols that are frequently used in the main paper as well as in the following supplementary material. \\ \medskip
\small
%\begin{small}

\begin{table}[ht!]
	\begin{centering}
		\resizebox{0.99\textwidth}{!}{
			\begin{tabular}{l|l}
				\hline
				\multicolumn{2}{c}{\textbf{General Learning Setting}} \\
				\hline 
				$\cX$ & instance space \\
				$\cY$ & label space, either $\{y_1 , \ldots , y_K\}$ for some $K \in \mathbb N_{\geq 2}$ for classification or $\cY = \R$ for regression \\
				$\cD$ & training data $\big\{ \big(\vec{x}^{(n)} , y^{(n)} \big) \big\}_{n=1}^N \subset \cX \times \cY \, $ \\
				$p^*$ & data generating probability \\
				$\prob^*( \cdot \given \vec{x})$ &  conditional distribution or density on $\cY$, i.e., $\prob^*( y \given \vec{x})$ probability to observe $y$ given $\vec{x}$ in classification \\
				& density of $y$ given $\vec{x}$  in regression \\
				$\mathbb{P} ( \cY),\mathbb{P}_1 ( \cY)$  & the set of probability distributions on $\cY$ \\
				$\mathbb{P}_1 ( \Theta)$  &  a parameterized subset of $\mathbb{P}_1 ( \cY)$ with $\Theta$ being the parameter space  \\
				%					
%				$\Delta_K$ & the $K$-simplex, i.e., $\Delta_K \defeq	  \big\{ \vec{\theta} = (\theta_1, \ldots , \theta_K) \in [0,1]^K \given \| \vec{\theta} \|_1 = 1 \big\}  $ \\
				%					
%				$\vec{\theta}= (\theta_1, \ldots , \theta_K)^\top$ & probability vector with K atoms, i.e., an element of $\Delta_K$ \\
%				$\vec{\theta}_{uni}:= (1/K, \ldots , 1/K)^\top$ & uniform distribution on $\cY$ (an element of $\Delta_K$) \\
				%					
%				$\vec{\theta}^*=\vec{\theta}^*(\vec{x})$ & (conditional) distribution on $\cY$, i.e., the ground-truth \\
				\hline
				\multicolumn{2}{c}{\textbf{First-order Learning Setting}} \\
				\hline
				$\mathcal{H}_1$ & (first-order) hypothesis space consisting of hypothesis $h :\cX \fromto \mathbb{P}_1 ( \cY)$ \\
				$L_1$ & loss function for first-order hypothesis, i.e., $L_1: \mathbb{P}_1 ( \cY) \times \cY \fromto \mathbb{R}$ \\
				$R_{1,emp}(\cdot)$ & empirical risk of a first-order hypothesis (cf.\ \eqref{eq:erisk}) \\
				$R_1(\cdot)$ & risk or expected loss of a first-order hypothesis (cf.\ \eqref{eq:risk}) \\
				$\hath$ & empirical risk minimiser, i.e., $\hath = \argmin_{h \in \cH_1} R_{1,emp}(h)$ \\
				$h^*$ & true risk minimiser or Bayes predictor, i.e., $h^* = \argmin_{h \in \cH_1} R_1(h)$\\
				$S_1(\cdot,\cdot)$ & first-order scoring rule induced by some first-order loss function $L_1$, \\
				& i.e., $	S_1:   \mathbb{P}_1(\cY) \times  \mathbb{P}_1(\cY)  \to \overline{\R}$ with $S_1(\hat p, p) = \E_{Y \sim p}[ L_1(\hat p, Y) ]$ (see \eqref{def_level_1_score}) \\
				% (cf.\ \eqref{eq:bayespred}) \\
				%					
				\hline
				\multicolumn{2}{c}{\textbf{Second-order Learning Setting}} \\
				\hline
				$\mathbb{P}_2 ( \cY)$ & the set of distributions on $\mathbb{P}_1 ( \cY)$ (the set of second-order distributions) \\
				$\mathbb{P}_2 ( \cM)$ & a parameterized subset of $\mathbb{P}_2 ( \cY)$ with $\cM$ being the parameter space  \\
				$\mathcal{H}_2$ & (second-order) hypothesis space consisting of hypothesis $H :\cX \fromto \mathbb{P}_2 ( \cY)$ \\
				%					
%				$H$ &  (second-order) hypothesis, i.e., a mapping  $H :\cX \fromto \mathbb{P}_2 ( \cY)$ \\
				%					 
				%					
				$Q,Q',\bar{Q},\widetilde{Q}$ & probability distributions on $\mathbb{P}_1 ( \cY)$ i.e., elements of $\mathbb{P}_2 ( \cY)$ \\
				$Q_{0}$ & uniform distribution on $\mathbb{P}_1 ( \cY)$ (an element of $\mathbb{P}_2 ( \cY)$) \\
				$L_2$ & loss function for second-order hypothesis, i.e., $L_2: \mathbb{P}_2 ( \cY) \times \cY \fromto \mathbb{R}$ \\
				$L^{\texttt{Bay}}_2$ & Bayesian loss functions for classification setting (see \eqref{def:bayes_losses}) \\
				$L^{\texttt{DER}}$ & deep evidential regression loss functions for regression setting (see \eqref{def:DER_loss}) \\
				$R_{2,emp}(\cdot)$ & empirical risk of a second-order hypothesis (cf.\ \eqref{eq:erisk_level_2}) \\
				$R_2(\cdot)$ & risk or expected loss of a second-order hypothesis (cf.\ \eqref{eq:risk_level_2}) \\
				$\hat H$ & second-order empirical risk minimiser, i.e., $\hat H = \argmin_{H \in \cH_2} R_{2,emp}(H)$ \\
				$S_2(\cdot,\cdot)$ & second-order scoring rule induced by some second-order loss function $L_2$, \\
				& i.e., $	S_2:   \mathbb{P}_2(\cY) \times  \mathbb{P}_2(\cY)  \to \overline{\R}$ with $S_2(\hat Q, Q) = \E_{p \sim Q} \big[\E_{Y \sim p}[ L_2(\hat Q, Y) ]\big]$ (see \eqref{def_level_2_score}) \\
				\hline
				\multicolumn{2}{c}{\textbf{Distributions \& Expectations}} \\
				\hline
				%					
%				$B(\theta)$ & Bernoulli distribution with parameter $\theta \in [0,1]$ \\
				%					
%				$\cat(\vec{\theta})$ & Categorical distribution with parameter $\vec{\theta} \in \Delta_K$\\
				%
				$\mathrm{N}(\mu,\sigma^2)$ & Gaussian distribution with location parameter $\mu$ and scale parameter $\sigma>0$ \\
				%
%				$\dir(\vec{\alpha})$ & Dirichlet distribution with parameter $\vec{\alpha} \in \mathbb R_+^K$ \\
				%					
				$\delta_{y}$ & Dirac measure at $y \in \cY$ (i.e., $\delta_{y}$ is an element $\mathbb{P}_1(\cY)$ ) \\			
				$\delta_{p}$ & Dirac measure at $p \in  \mathbb{P}_1(\cY)$ (i.e., $\delta_{p}$ is an element $\mathbb{P}_2(\cY)$ ) \\
				%					
%				$ \stackrel{\mathbb{P}}{\to} $ & convergence in distribution \\
%
				$\E(p)$ & expected value of the distribution $p\in \mathbb{P}_1(\R),$ i.e., $\E(p) = \int y \, \mathrm{d}p(y)$ \\
				$\mathbb{V}(p)$ & variance of a distribution $p\in \mathbb{P}_1(\mathbb R),$ i.e., $\mathbb{V}(p) = \E \big[ (p - \E[p])^2 \big]$ \\
				$\E_{p\sim Q}[p(y)]$ & expected probability assigned to class $y \in \cY$ (i.e., classification setting) according to $Q \in \mathbb{P}_2(\cY),$ \\
				& i.e., $ \E_{p\sim Q}[p(y)] = \int_{\mathbb{P}_1(\cY)} p(y) \mathrm{d}Q(p)  $\\
				$\E_{p\sim Q}[\E(p) ] $ & expected value of the expected distribution (for regression, i.e., $\cY = \R$) according to $Q \in \mathbb{P}_2(\cY),$ \\
				& i.e., $ \E_{p\sim Q}[\E(p) ] = \int_{\mathbb{P}_1(\cY)} \int_{\cY} y \, \mathrm{d} p(y) \mathrm{d}Q(p)  $ \\
				\hline
				\multicolumn{2}{c}{\textbf{Miscellaneous}} \\
				\hline
				%						
%				$\entropy(\cdot)$ & Shannon entropy (on $\Delta_K^{(2)}$) \\
				%					
				$d_{KL}\left(\cdot, \cdot\right)$ & Kullback-Leibler divergence (on $\mathbb{P}_2(\cY) \times \mathbb{P}_2(\cY)$)\\
				$L_1^{\texttt{Brier}}$ & Brier score (see \eqref{def:Brier_score}) \\
				$L_1^{\texttt{CE}}$ & cross-entropy loss (see \eqref{def:cross_entropy_loss}) \\
				$L^{\texttt{t}}$ & negative log-likelihood of Student-t distribution (see \eqref{def:neg_log_likeli_student}) \\
				$\mathrm{PEN}$ & penalization function in deep evidential regression loss function   (see \eqref{def:neg_log_likeli_student}) \\
%				$d^{(1)}(\cdot,\cdot)$ & some metric on $\Delta_K$ \\				
				%					
%				$U(\cdot)$ & an uncertainty measure (on $\Delta_K^{(2)}$), see Definition \ref{def:appropriate_l2_loss}\\
%				
		\end{tabular}}
	\end{centering}
\end{table}
%\end{small}
%
\normalsize
\clearpage

\section{Missing Proofs of \cref{section:level-2-scoring-rules}}

\subsection{Proof of \cref{theorem_level_2_scoring_rule_characterization}}

We will use the following lemma for the proof of \cref{theorem_level_2_scoring_rule_characterization}, which we shall prove at the end of this subsection. 
\begin{lemma} \label{lemma_supertangent_concave_implication}
	If a function $G:\mathbb{P}_2(\cY) \to \R$ has a supertangent $G^*(Q,\cdot)$ at any $Q \in \mathbb{P}_2(\cY),$ then $G$ is concave.
	If the supertangent property in \eqref{supertangent_property} holds with strict inequality for all $Q \neq \widetilde{Q},$ then $G$ is strictly concave.
\end{lemma}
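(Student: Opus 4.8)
The plan is to realise $G$ as a pointwise infimum of affine functionals on $\mathbb{P}_2(\cY)$, which forces concavity; the strict version then falls out by inspecting which inequalities are strict. Fix any $\widetilde{Q} \in \mathbb{P}_2(\cY)$ and abbreviate
\begin{align*}
\ell_{\widetilde{Q}}(Q) := \int_{\mathbb{P}_1(\cY)} \int_{\cY} G^*(\widetilde{Q},y)\, \mathrm{d}p(y)\, \mathrm{d}Q(p),
\end{align*}
so that the defining inequality \eqref{supertangent_property} reads $G(Q) \le G(\widetilde{Q}) + \ell_{\widetilde{Q}}(Q) - \ell_{\widetilde{Q}}(\widetilde{Q})$ for every $Q \in \mathbb{P}_2(\cY)$, with equality at $Q=\widetilde{Q}$. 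The first thing I would record is that $\ell_{\widetilde{Q}}$ is affine in its argument: writing $h(p) = \int_{\cY} G^*(\widetilde{Q},y)\,\mathrm{d}p(y)$ we have $\ell_{\widetilde{Q}}(Q) = \int h\,\mathrm{d}Q$, and integration against a convex combination of second-order measures splits linearly. The integrability of $G^*(\widetilde{Q},\cdot)$ against $\widetilde{Q}$ and its quasi-integrability against every $Q$ make all these quantities well defined in $\overline{\R}$; since $G$ is $\R$-valued, $\ell_{\widetilde{Q}}(\widetilde{Q})$ is finite, and on any segment through $\widetilde{Q}$ the functional $\ell_{\widetilde{Q}}$ is finite, so the cancellations below are legitimate.

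For concavity I would fix $Q,\widetilde{Q}\in\mathbb{P}_2(\cY)$ and $\lambda\in[0,1]$, set $Q_\lambda := \lambda Q + (1-\lambda)\widetilde{Q}$, and apply the supertangent inequality \emph{at $Q_\lambda$} to the two points $Q$ and $\widetilde{Q}$:
\begin{align*}
G(Q) &\le G(Q_\lambda) + \ell_{Q_\lambda}(Q) - \ell_{Q_\lambda}(Q_\lambda), \\
G(\widetilde{Q}) &\le G(Q_\lambda) + \ell_{Q_\lambda}(\widetilde{Q}) - \ell_{Q_\lambda}(Q_\lambda).
\end{align*}
Taking the $\lambda$-weighted sum of the first line with the $(1-\lambda)$-weighted sum of the second, and using affineness, $\lambda\,\ell_{Q_\lambda}(Q) + (1-\lambda)\,\ell_{Q_\lambda}(\widetilde{Q}) = \ell_{Q_\lambda}(Q_\lambda)$, all linear terms cancel and one is left with $\lambda G(Q) + (1-\lambda)G(\widetilde{Q}) \le G(Q_\lambda)$, which is exactly concavity.

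For the strict statement I would run the identical computation, observing that when $Q\neq\widetilde{Q}$ and $\lambda\in(0,1)$ one has $Q_\lambda\neq Q$ and $Q_\lambda\neq\widetilde{Q}$, so the hypothesis yields \emph{strict} inequality in both displayed lines; since $\lambda$ and $1-\lambda$ are strictly positive, the weighted sum remains strict, giving $\lambda G(Q) + (1-\lambda)G(\widetilde{Q}) < G(Q_\lambda)$. The remaining cases ($\lambda\in\{0,1\}$ or $Q=\widetilde{Q}$) are equalities by inspection, which is all that strict concavity demands.

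The only genuinely delicate point — the step I would be most careful to write out — is the measure-theoretic bookkeeping: ensuring that splitting $\int\cdot\,\mathrm{d}(Q-\widetilde{Q})$ into a difference of integrals and cancelling $\ell_{Q_\lambda}(Q_\lambda)$ never produces an $\infty-\infty$. This is guaranteed because $G$ is real-valued and $G^*(Q_\lambda,\cdot)$ is integrable against $Q_\lambda$, which (for $\lambda\in(0,1)$) forces $\ell_{Q_\lambda}(Q)$ and $\ell_{Q_\lambda}(\widetilde{Q})$ to be finite as well; I would state this once and then manipulate freely. Everything else is just the elementary ``an infimum of affine functions is concave'' argument transported to $\mathbb{P}_2(\cY)$.
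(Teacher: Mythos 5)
Your proof is correct and follows essentially the same route as the paper's: apply the supertangent inequality at the convex combination $Q_\lambda$ to both endpoints, take the corresponding convex combination of the two inequalities, and observe that the linear terms cancel. Your extra care about finiteness of $\ell_{Q_\lambda}(Q)$ and $\ell_{Q_\lambda}(\widetilde{Q})$ via integrability against $Q_\lambda$ is a welcome refinement the paper leaves implicit, but it does not change the argument.
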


\begin{proof} 
	%
	%	Left as an exercise to the reader. \medskip
	%	
	%	Just kidding ... here it is:
	%	
	Suppose $S_2$ (or rather $L_2$) fulfills the representation in \eqref{characterization_proper_level_2_scoring_rule}, then
	\begin{align*}
		S_2&(\hat Q, Q) \\
		&= \E_{p \sim Q} \big[\E_{Y \sim p}[ L_2(\hat Q, Y) ]\big] \\
		&= \int_{\mathbb{P}_1(\cY)} \int_{\cY} L_2(\hat Q, y)  \, \mathrm{d}p(y)\, \mathrm{d}Q(p)  \\
		&\stackrel{\eqref{characterization_proper_level_2_scoring_rule}}{=} G_2(\hat Q) + \int_{\mathbb{P}_1(\cY)} \int_{\cY} G_2^*(\hat Q,y) \, \mathrm{d}p(y)\, \mathrm{d} Q(p) \\
		&\quad -  \int_{\mathbb{P}_1(\cY)} \int_{\cY} G_2^*(\hat Q,y) \, \mathrm{d}p(y)\, \mathrm{d} \hat Q(p) \\
		&= G_2(\hat Q) + \int_{\mathbb{P}_1(\cY)} \int_{\cY} G^*(\hat Q,y) \, \mathrm{d}p(y)\, \mathrm{d} ( Q - \hat Q)(p) \\
		&\stackrel{\eqref{supertangent_property}}{\geq} G_2(Q) \\
		&= G_2(Q) + \int_{\mathbb{P}_1(\cY)} \int_{\cY} G_2^*(Q,y) \, \mathrm{d}p(y)\, \mathrm{d} Q(p) \\
		&\quad -  \int_{\mathbb{P}_1(\cY)} \int_{\cY} G_2^*(Q,y) \, \mathrm{d}p(y)\, \mathrm{d} Q(p) \\
		&\stackrel{\eqref{characterization_proper_level_2_scoring_rule}}{=} \int_{\mathbb{P}_1(\cY)} \int_{\cY} L_2(Q, y)  \, \mathrm{d}p(y)\, \mathrm{d}Q(p) \\
		&= \E_{p \sim Q} \big[\E_{Y \sim p}[ L_2(Q, Y) ]\big] \\
		&= S_2(Q, Q),
	\end{align*}
	where we used for the inequality that $G_2^*$ is a supertangent of $G$ at $\hat Q,$ i.e., that $\eqref{supertangent_property}$ holds for $Q.$
	Note that the inequality is strict if $G_2$ is strictly concave and $\hat Q \neq Q.$
	
	Conversely, suppose $S_2$ to be (strictly) proper scoring rule. 
	%	= \inf_{\widetilde{Q} \in \mathbb{P}_2(\cY)} S_2(\widetilde{Q},Q)
	Define $G_2$ by $G_2(Q) = S_2(Q,Q),$ then $G_2^*(Q,y) = L_2(Q,y)$ is a supertangent of $G_2$ at any $\widetilde{Q} \in \mathbb{P}_2(\cY).$
	Indeed, let $Q \in \mathbb{P}_2(\cY),$ then
	\begin{align*}
		G_2&(Q) \\
		&= S_2(Q,Q) \\
		%		\\
		&= S_2(\widetilde{Q},\widetilde{Q}) -  S_2(\widetilde{Q},\widetilde{Q}) + S_2(\widetilde{Q}, Q) -  S_2(\widetilde{Q}, Q)  \\
		&\quad +  S_2(Q, Q) \\
		&\leq S_2(\widetilde{Q},\widetilde{Q}) -  S_2(\widetilde{Q},\widetilde{Q}) + S_2(\widetilde{Q}, Q) \\
		&=  S_2(\widetilde{Q}, \widetilde{Q})  + \int_{\mathbb{P}_1(\cY)} \int_{\cY} L_2(\widetilde{Q},y) \, \mathrm{d}p(y)\, \mathrm{d} (Q - \widetilde{Q})(p) \\
		&= G_2(\widetilde{Q}) + \int_{\mathbb{P}_1(\cY)} \int_{\cY} G_2^*(\widetilde{Q},y) \, \mathrm{d}p(y)\, \mathrm{d} (Q - \widetilde{Q})(p),
	\end{align*}
	where for the inequality we used that $S_2$ is proper.
	This inequality is strict for all $\widetilde{Q} \neq Q$ if $S_2$ is strictly proper.
	Thus, $G_2$ is (strictly) concave due to \cref{lemma_supertangent_concave_implication}. 
	The definitions of $G_2$ and $G_2^*$ directly imply that $L_2$ has the representation in \eqref{characterization_proper_level_2_scoring_rule}, since $G_2(Q) = \int_{\mathbb{P}_1(\cY)} \int_{\cY} G_2^*(Q,y) \, \mathrm{d}p(y)\, \mathrm{d} Q(p).$
	
\end{proof}

\begin{proof}[Proof of \cref{lemma_supertangent_concave_implication}]
	Let $Q,\widetilde{Q} \in \mathbb{P}_2(\cY)$ and $\lambda \in[0,1]$ be arbitrary but fixed.
        Abbreviate $Q_\lambda = (1-\lambda) Q +\lambda \widetilde{Q}.$ 
        Then, since $G^*$ is by assumption a supertangent of $G$ for any element of $\mathbb{P}_2(\cY),$ it holds by the supertangent property (see \eqref{supertangent_property}) that
        \begin{align*}
            &G(Q) \leq G(Q_\lambda) + \int_{\mathbb{P}_1(\cY)} \int_{\cY} G^*(Q_\lambda,y) \, \mathrm{d}p(y)\, \mathrm{d} (Q - Q_\lambda)(p), \\
            &G(\widetilde{Q} ) \leq G(Q_\lambda) + \int_{\mathbb{P}_1(\cY)} \int_{\cY} G^*(Q_\lambda,y) \, \mathrm{d}p(y)\, \mathrm{d} (\widetilde{Q}  - Q_\lambda)(p).
        \end{align*}
        If we consider the convex combination of these two inequalities and noting that $ Q - Q_\lambda = \lambda Q  -\lambda \widetilde{Q}  $ as well as $ \widetilde{Q} -Q_\lambda = (1-\lambda) \widetilde{Q} - (1-\lambda) Q,$ we obtain
        \begin{align*}
                (1-\lambda) G(Q) + \lambda G(\widetilde{Q}) 
                &\leq  (1-\lambda)  G(Q_\lambda) &&+  (1-\lambda) \int_{\mathbb{P}_1(\cY)} \int_{\cY} G^*(Q_\lambda,y) \, \mathrm{d}p(y)\, \mathrm{d} (Q - Q_\lambda)(p) \\
                &\quad + \lambda G(Q_\lambda)  &&+ \lambda \int_{\mathbb{P}_1(\cY)} \int_{\cY} G^*(Q_\lambda,y) \, \mathrm{d}p(y)\, \mathrm{d} (\widetilde{Q}  - Q_\lambda)(p) \\
                &=  (1-\lambda)  G(Q_\lambda) &&+  (1-\lambda)\lambda \int_{\mathbb{P}_1(\cY)} \int_{\cY} G^*(Q_\lambda,y) \, \mathrm{d}p(y)\, \mathrm{d} (Q - \widetilde{Q})(p) \\
                &\quad + \lambda G(Q_\lambda)  &&+ (1-\lambda) \lambda \int_{\mathbb{P}_1(\cY)} \int_{\cY} G^*(Q_\lambda,y) \, \mathrm{d}p(y)\, \mathrm{d} (\underbrace{\widetilde{Q}  - Q}_{= - (Q - \widetilde{Q})})(p) \\
                &= G(Q_\lambda).
        \end{align*}
        Thus, $G$ is concave according to \cref{def:concavity_supertangent}.
        
        If $G^*$ is a strict supertangent, then all inequalities above are strict if $Q \neq \widetilde{Q}$ and $\lambda \in(0,1)$ and consequently $G$ is strictly concave in this case.
\end{proof}

\subsection{Proof of \cref{theorem_level_2_scoring_rule_characterization_2}}

\begin{proof}
	Suppose $S_2$ is order sensitive. 
	Since $ \mathbb{P}_2(\cY) $ is convex, we can represent any element $\hat Q$ by a suitable convex combination of a target second-order distribution $Q$ and another suitable second-order distribution $\widetilde{Q}.$ 
	Formally, for all $\hat Q,Q$ there exist $\lambda\in[0,1]$ and $\widetilde{Q}$ such that $\hat Q = (1-\lambda)\widetilde{Q} + \lambda Q.$
	Thus, 
	\begin{align*}
		S_2(\hat Q, Q) = 	S_2((1-\lambda)\widetilde{Q} + \lambda Q, Q) \geq S_2(Q,Q),
	\end{align*}
	which implies that $S_2$ is proper.
	This inequality is strict if $S_2$ is strictly order sensitive and $\hat Q \neq Q$ implying that $S_2$ is strictly proper.

	Now, assume that $S_2$ is proper.
	Note that any scoring-rule $S_2$ is (convex) linear in its second argument: For all $\lambda \in [0,1]$ and $Q,Q',\hat Q \in \mathbb{P}_2(\cY)$ it holds that
	\begin{align*}
		S_2(\hat Q, &\lambda Q + (1-\lambda) Q') \\
		&= \E_{p \sim \lambda Q + (1-\lambda) Q'} \big[\E_{Y \sim p}[ L_2(\hat Q, Y) ]\big] \\
		&= \int_{\mathbb{P}_1(\cY)} \int_{\cY} L_2(\hat Q, y)  \, \mathrm{d}p(y)\, \mathrm{d}(\lambda Q + (1-\lambda) Q')(p) \\
		&= \lambda \int_{\mathbb{P}_1(\cY)} \int_{\cY} L_2(\hat Q, y)  \, \mathrm{d}p(y)\, \mathrm{d}  Q(p)  \\
		&\quad + (1-\lambda) \int_{\mathbb{P}_1(\cY)} \int_{\cY} L_2(\hat Q, y)  \, \mathrm{d}p(y)\, \mathrm{d}  Q'(p) \\
		&= \lambda S_2(\hat Q,   Q  ) + (1-\lambda)  S_2(\hat Q,   Q'  ).
	\end{align*}
	Order sensitivity of a scoring rule holds if for all $\lambda\in[0,1]$ and $Q,Q' \in \mathbb{P}_2(\cY)$ 
	\begin{align} \label{equivalent_order_sensitivity_statement}
		S_2(Q',Q) - S_2(  (1-\lambda)Q' + \lambda Q, Q ) \geq 0.
	\end{align}
	This follows by propriety of $S_2:$ Abbreviate $\widetilde{Q} = (1-\lambda)Q' + \lambda Q$ and note that by convex linearity in the second argument
	\begin{align*}
            &(1-\lambda) S_2(Q',Q') + \lambda S_2(Q',Q) = S_2(Q',\widetilde{Q})
            \geq  S_2(\widetilde{Q},\widetilde{Q}) = (1-\lambda) S_2(\widetilde{Q},Q') + \lambda S_2(\widetilde{Q},Q),
%		%		
%&\underbrace{(1-\lambda) S_2(Q',Q') + \lambda S_2(Q',Q)}_{= S_2(Q',\widetilde{Q}) } \\
%%		
%%		
%&\geq  \underbrace{(1-\lambda) S_2(\widetilde{Q},Q') + \lambda S_2(\widetilde{Q},Q)}_{=S_2(\widetilde{Q},\widetilde{Q})},
%%		
	\end{align*}
	which is equivalent to
	\begin{align*}
		S_2(Q',Q) &- S_2(  (1-\lambda)Q' + \lambda Q, Q ) 
		\geq \frac{(1-\lambda)}{\lambda} \Big( S_2(\widetilde{Q}, Q') - S_2(Q',Q')   \Big).
	\end{align*}
	The right-hand side is non-negative since $S_2$ is proper, which implies \eqref{equivalent_order_sensitivity_statement}.
	Finally, if $S_2$ is strictly proper, then \eqref{equivalent_order_sensitivity_statement} holds with strict inequality implying strict order sensitivity of $S_2.$ 
\end{proof}

\end{document}